\declaretheorem[name=Theorem,numberwithin=section]{theorem}
\newtheorem{lemma}[theorem]{Lemma}
\theoremstyle{definition}
\declaretheorem[name=Definition,sibling=theorem]{definition}
\newtheorem{example}[theorem]{Example}
\newcommand{\tup}[1]{\langle #1 \rangle}
\newcommand{\R}{\mathbb{R}}
\newcommand{\E}{\mathbb{E}}
\newcommand{\core}{\mathit{core}}
\renewcommand{\cal}[1]{\mathcal{#1}}
\newcommand{\eps}{\varepsilon}
\newcommand{\CE}{{\mathit{CE}}}
\newcommand{\Cond}{{\mathit{Cond}}}
\newcommand{\probleminstance}{{\Psi}}
\newcommand{\probleminstancename}{\textsc{StatisticalSolution}\xspace}
\newcommand{\G}{\mathbb{G}}
\newcommand{\HC}{\mathbb{H}}
\newcommand{\I}{\mathbb{I}}
\newcommand{\Sol}{\mathbb{S}}
\DeclareMathOperator{\argmax}{\mathrm{argmax}}
\DeclareMathOperator{\argmin}{\mathrm{argmin}}
\newcommand{\VC}{\mathit{VC}}
\newcommand{\Ind}{\mathbbm{1}}
\newcommand{\Sd}{\mathit{Sd}}
\newcommand{\UC}{\mathit{UC}}
\newcommand{\PAC}{\mathit{PAC}}
\title{A Learning Framework for Distribution-Based Game-Theoretic Solution Concepts}
\author{%
Tushant Jha\\
International Institute of Information Technology, Hyderabad\\
\texttt{particle.mania@gmail.com} \\
\And
Yair Zick\\
National University of Singapore, Singapore\\
\texttt{zick@comp.nus.edu.sg}
}
\begin{document}

\maketitle

\begin{abstract}
	The past few years have seen several works on learning economic solutions from data; these include optimal auction design, function optimization, stable payoffs in cooperative games and more. 
	In this work, we provide a unified learning-theoretic methodology for modeling such problems, and establish tools for determining whether a given economic solution concept can be learned from data. 
	Our learning theoretic framework generalizes a notion of function space dimension --- the graph dimension --- adapting it to the solution concept learning domain. 
	We identify sufficient conditions for the PAC learnability of solution concepts, and show that results in existing works can be immediately derived using our methodology. Finally, we apply our methods in other economic domains, yielding a novel notion of PAC competitive equilibrium and PAC Condorcet winners. 
\end{abstract}
\section{Introduction}\label{sec:intro}
Recent years have seen widespread application of learning-theoretic notions in economic domains. 
Rather than assuming full knowledge of the underlying domain (or a prior over the domain space), one assumes access to a dataset of past instances, and employs learning-theoretic tools in order to obtain approximate solutions. 
Consider the following simple example of learning a solution from data: we wish to find the maximum of a function $f:\R^n \to \R$; we do not have access to $f$, but rather to a dataset of the form $\tup{\vec x_1,f(\vec x_1)},\dots,\tup{\vec x_m,f(\vec x_m)}$. One way to find a likely candidate point would be to use classic learning-theoretic tools \cite{anthony1999learning}, learn an approximation $f^*$ of $f$, and compute the maximum of $f^*$; however, this goes above and beyond the problem requirement: the approximability of $f$ depends on its {\em hypothesis class} (whether $f$ is a linear function, a two-layer neural network etc.), and on the approximation robustness. A much simpler solution is available: if the number of samples is sufficiently large, taking the {\em empirical maximum} of $f$ over the dataset --- i.e. $\vec x^* \in \argmax_j \{f(\vec x_j):j \in [m]\}$ --- yields a point that is likely to be greater in value than any future point sampled from the same distribution as the original dataset. 

The same reasoning applies to other economic solutions: a naive approach to inferring solutions from data would be to learn an approximate model (e.g. learn a function $f^*$ which approximates $f$ in the maximization example above), and then try generating solutions for the approximate model. However, as has been shown in the literature, learning an approximate model may: 
\begin{enumerate}
	\item be insufficient for generating `good' solutions (this is the case in \cite{sliwinski2017hedonic}) 
	\item require an exponential number of samples, whereas directly learning solutions is easy. Indeed, finding a payoff in the core of TU games is easy \cite{balcan2015learning-arxiv,balkanski2017cost}, while PAC learning cooperative games requires an exponential number of samples \cite{balcan2015learning}; this is also the case for finding an empirical maximum in the example above.
\end{enumerate}
Recent works directly learn solutions to classic optimization problems, as well as solutions in game-theoretic domains. These lines of work have progressed more or less independently, proving that solutions in a specific problem domain can (or cannot) be efficiently inferred from data; however, there has been no attempt to provide a unified theory of learning solution concepts from data. This is where our work comes in.  
\subsection{Our Contributions}\label{sec:contrib}
We begin by establishing a learning-theoretic framework for learning solution concepts from data. Unlike classic learning problem spaces, solution concepts do not inhabit the same space as the observed samples (e.g. when learning an approximate maximum, the function space is $\R^n \to \R$, whereas the solution space is $\R^n$). In Section \ref{sec:pac-sol-concepts}, we define the {\em solution dimension}: this quantity depends on both the hypothesis class of the underlying game and the solution space. The solution dimension generalizes the graph dimension \cite{daniely2011multiclass} in PAC learning, and serves a similar purpose: if the solution dimension is low, then a distribution-based solution can be efficiently learned from samples. Drawing on notions of shattering from VC dimension, we introduce {\em solution concept shattering} which is used to bound the solution dimension in various domains. 
We also show that the existence of a consistent solution and a low graph dimension are sufficient conditions for PAC learning solutions, simplifying technical learnability arguments in existing works, as well as paving the way for a straightforward learnability approach of other solution concepts. 
In Section~\ref{sec:comp-solutions}, we apply our methodology to immediately derive sample complexity bounds on learning solutions for hedonic games, as well as for two novel domains: market equilibria, and Condorcet winners in voting. 
\subsection{Related Work}\label{sec:related}
Several recent works study learning solutions from data; these include solutions in cooperative games \cite{balcan2015learning,balkanski2017cost,igarashi2019learning,sliwinski2017hedonic}, combinatorial auctions \cite{balcan2018general, balcan2016sample,Brero2018CombinatorialAV, Cole2014TheSC, devanur2016sampleauctions, morgenstern2015learning, Syrgkanis2017ASC}, voting and judgment aggregation \cite{Bhattacharyya2015SampleCF, Zhang2018APF}, envy-free allocation \cite{balcan2018efclass}, and optimization \cite{balkanski2017sample,balkanski2016optimization,Rosenfeld2018LearningTO}. 
Some of these works offer low-error approximation guarantees with respect to an optimal solution, such as estimating the maximum \cite{balkanski2017sample,balkanski2016optimization}, maximizing revenue in mechanism design \cite{Cole2014TheSC}, or finding an election winner \cite{Bhattacharyya2015SampleCF}; our work focuses on solutions that minimize {\em expected loss with respect to sampled data}, as is the case when learning the core of a cooperative game \cite{balcan2015learning,igarashi2019learning,sliwinski2017hedonic}, reserve prices in auctions \cite{balcan2018general,morgenstern2016learning}, or approximately efficient allocations \cite{Brero2018CombinatorialAV}. While some of the above works explicitly explore the dimension of the solution space, they do not offer the full generality of our model.

Our analysis of the underlying solution space utilizes recent learning-theoretic tools \cite{daniely2011multiclass}, yielding an extension of classical function dimension measures such as the VC dimension \cite{anthony1999learning,chervonenkis71VC}, and the graph dimension \cite{daniely2011multiclass}.
Our results generalize the General Learning problem discussed in \cite{shalev2010learnability}, as learning solution concepts can also involve some global properties playing a role in the loss function.
\subsection{Classic PAC Learning}\label{sec:pac-primer}
For the sake of completeness, we provide a brief overview of the PAC learning model.
A {\em learning problem} is defined over an instance space $\cal X$ and a set of functions (the \textit{hypothesis class}) $\HC \subseteq \cal Y^\cal X$ ($\cal Y$ is the {\em label space}). Let $\cal D$ be a distribution over $\cal X\times \cal Y$; we let the {\em loss} of $h \in \HC$ given $\cal D$ be $L_{\cal D}(h) = \Pr_{(x,y) \sim\cal D}[h(x) \ne y]$. 
Given a set of $m$ i.i.d. samples $T= \tup{x_j,y_j}_{j \in [m]}$ from $\cal D$, the {\em empirical loss} of $h \in \HC$ is $\hat L_{T}(h) = \frac1m \sum_{j = 1}^m \Ind(h(x_j) = y_j)$. We assume that $\cal D$ is a distribution over $\cal X$, where every $x \in \cal X$ is evaluated by some unknown $c\in \HC$; this is referred to as the {\em realizable case}, in which there is some $h \in \HC$ for which $L_{\cal D}(h) = 0$, and there is at least one hypothesis $h \in \HC$ for which $\hat L_{T}(h) = 0$ for any $T\subseteq \cal X$. An algorithm $\cal A$ is a PAC learner for $\HC$ if there is some $m_0$ polynomial in $\frac1\eps,\frac1\delta$ and the natural problem parameters, such that for any distribution $\cal D$ and any set of $m \ge m_0$ samples $T$ sampled i.i.d. from $\cal D$, $\cal A$ outputs a hypothesis $h^* \in \HC$ (which is a function of $T$, but not of $\cal D$) such that $\Pr_{T\sim \cal D^m}[L_{\cal D}(h^*) \ge \eps] < \delta$.


For binary hypothesis classes (where the label space is $\cal Y = \{\pm1\}$), the VC dimension \citep{chervonenkis71VC} characterizes the sample complexity of $\HC$. The sample complexity required by any PAC learning algorithm for $\HC$ is upper and lower-bounded by the VC dimension of $\HC$.
This is achieved by an algorithm that outputs a {\em consistent} hypothesis, i.e. one which minimizes the empirical loss $\hat L_T(h)$ w.r.t. a sample $T$. 
\begin{definition}\label{def:vc-dim}
Given a hypothesis class $\HC$, a set $C \subseteq \cal X$ is said to be shattered if for any binary labeling $b:C\to \{0,1\}$ there exists some $h \in \HC$ such that $h(x) = b(x)$ for all $x \in C$.
The VC dimension of $\HC$, or $VC(\HC)$, is the size of the largest set $C \subseteq \cal X$ that is shattered by $\HC$.
\end{definition}
For example, if the hypothesis class is the set of all linear classifiers over $\R^n$, its VC dimension is $\cal O(n)$ \cite{anthony1999learning}. Theorem~\ref{thm:vc-class} relates the VC dimension and the PAC learnability of $\HC$.
\begin{theorem}\label{thm:vc-class}
 There exists absolute constants $\alpha_1$ and $\alpha_2$, such that for a hypothesis class $\HC$, the sample complexity of $\HC$ with respect to $\eps$ and $\delta$ (denoted $m(\eps,\delta)$) is
 $$\frac{\alpha_1}{\eps} \left( \VC(\HC) + \log\left(\frac{1}{\delta}\right) \right) \leq m(\eps, \delta) \leq \frac{\alpha_2}{\eps} \left( \log\left(\frac{1}{\eps}\right)\VC(\HC) + \log\left(\frac{1}{\delta}\right) \right)$$
\end{theorem}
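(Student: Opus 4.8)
The plan is to establish the upper and lower bounds separately; both reduce to the fact that $\VC(\HC) = d$ controls the ``effective cardinality'' of $\HC$ on finite samples.

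\emph{Upper bound.} I would first introduce the growth function $\Pi_{\HC}(m) = \max_{x_1,\dots,x_m \in \mathcal{X}} |\{(h(x_1),\dots,h(x_m)) : h \in \HC\}|$ and invoke the Sauer--Shelah lemma: if $\VC(\HC) = d$ then $\Pi_{\HC}(m) \le \sum_{i=0}^{d}\binom{m}{i} \le (em/d)^d$ for $m \ge d$. In the realizable case the learner may output any hypothesis $h^*$ consistent with the sample $T$ (one exists), so it suffices to bound the probability that some $h \in \HC$ has zero empirical loss on $T$ yet $L_{\mathcal{D}}(h) \ge \eps$. The central device is the double-sampling (symmetrization) argument: this ``bad event'' on $T$ is, up to a factor of $2$, dominated by the event that on an independent ghost sample $T'$ of the same size some $h$ makes no mistakes on $T$ but at least $\eps m / 2$ mistakes on $T'$; conditioning on the multiset $T \cup T'$ turns this into a union bound over the at most $\Pi_{\HC}(2m)$ distinct labelings it induces, and each term is bounded by an elementary $2^{-\eps m/2}$ estimate coming from a random partition of the $2m$ points into $T$ and $T'$. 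Combining gives $\Pr_{T}[L_{\mathcal{D}}(h^*) \ge \eps] \le 2\,\Pi_{\HC}(2m)\,2^{-\eps m/2}$; plugging in Sauer--Shelah, requiring the right-hand side to be below $\delta$, and solving for $m$ yields $m = \mathcal{O}\!\left(\tfrac1\eps\big(d \log \tfrac1\eps + \log\tfrac1\delta\big)\right)$, which is the stated upper bound with a suitable $\alpha_2$.

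\emph{Lower bound.} Here I would exhibit hard instances on a shattered set $C = \{x_0,x_1,\dots,x_{d-1}\}$ with $d = \VC(\HC)$ (Definition~\ref{def:vc-dim}). Put probability $1 - 8\eps$ on $x_0$ with a fixed label and spread the remaining $8\eps$ uniformly over $x_1,\dots,x_{d-1}$; since $C$ is shattered, every labeling $b$ of $x_1,\dots,x_{d-1}$ extends to a target $c_b \in \HC$, and I take $b$ uniformly at random. With $m = o(d/\eps)$ samples the learner sees only a constant fraction of the rare points, so it must guess on the rest, incurring expected error $\Omega(\eps)$ averaged over $b$; a Markov/averaging argument then upgrades this to ``$L_{\mathcal{D}}(h^*) \ge \eps$ with probability at least $\delta$'' for some fixed $c_b$, producing the $d/\eps$ term. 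The $\log(1/\delta)/\eps$ term follows from a separate, simpler construction using a single rare point: place mass $\eps$ on a point $x$ whose label can be either of two shatterable values; unless the learner draws $x$ it errs with probability at least $\eps$, and $\Pr[x \notin T] = (1-\eps)^m \ge \delta$ unless $m = \Omega(\tfrac1\eps \log\tfrac1\delta)$. Taking the larger of the two bounds gives the claimed lower bound with a suitable $\alpha_1$.

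I expect the main obstacle to be the symmetrization step in the upper bound: justifying the factor-of-two loss and, crucially, the reduction from a union bound over all of $\HC$ to one over the finitely many labelings indexed by $T \cup T'$ is the delicate part, and it is precisely where the extra $\log(1/\eps)$ factor --- the only gap between the upper and lower bounds --- enters. The remaining ingredients are either purely combinatorial (Sauer--Shelah) or routine concentration and averaging computations, and the absolute constants $\alpha_1,\alpha_2$ are read off from them.
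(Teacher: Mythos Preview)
Your proof sketch is the standard textbook argument and is essentially correct, but there is nothing to compare it to: the paper does not prove Theorem~\ref{thm:vc-class}. It is stated in the ``Classic PAC Learning'' primer (Section~\ref{sec:pac-primer}) as a known result from the literature (with citations to \cite{anthony1999learning,chervonenkis71VC}) and is used only as a black box---most notably inside the proof of Theorem~\ref{thm:upperbound}. So the ``paper's own proof'' you were meant to compare against does not exist; the approach you outline (Sauer--Shelah plus double-sampling for the upper bound, shattered-set constructions for the lower bound) is exactly the classical one and would be accepted as a proof of the quoted statement.
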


Theorem \ref{thm:vc-class} can be slightly generalized to the following claim: for any two functions $f,g \in \HC$, the empirical loss on an i.i.d. sample of more than $\frac{\alpha_2}{\eps} \left( \log\left(\frac{1}{\eps}\right)\VC(\HC) + \log\left(\frac{1}{\delta}\right) \right)$ points, is close within $\eps$ to the statistical loss (ie. $\underset{x \sim \cal D}{\Pr}[f(x) \neq g(x)]$).


The case where samples are labelled by some arbitrary function $c$ (not necessarily in $\HC$) is also known as the {\em agnostic} case; however, as a result of the uniform convergence results, if an algorithm $\cal A$ outputs a hypothesis $h^*$ that minimizes empirical risk --- $\forall h \in \HC:\hat L_T(h^*) \le \hat L_{T}(h)$ --- the statistical error is $\le \eps$: $\Pr_{T \sim \cal D^m}[L_{\cal D}(h^*) \ge \min_{h \in \HC}L_{\cal D}(h) + \eps] < \delta$. 
Therefore, as discussed in \cite{daniely2011multiclass}, uniform convergence is a powerful tool for bounding statistical loss.

\subsection{Game-Theoretic Solution Concepts}\label{sec:game-theory-concepts}
In what follows, we briefly introduce the solution concepts discussed in this work. In all scenarios below, we have a set of {\em players} $N = \{1,\dots,n\}$, with preferences over outcomes induced in some manner; our objective is to obtain a {\em solution} with some desirable properties.

\subsubsection{Hedonic Games}\label{sec:hedonicgames-primer}
In hedonic games \cite[Chapter 15]{brandt2016handbook}, each player $i \in N$ has a complete, transitive preference order $\succ_i$ over coalitions in $N$ that contain it. Solutions are partitions (also referred to as {\em coalition structures}) of $N$; a coalition structure $\pi$ is {\em blocked} by a coalition $S \subseteq N$ if all members of $S$ prefer $S$ over the coalition they are in (denoted $\pi(i)$), i.e. $S\succ_i \pi(i)$ for all $i \in S$. The {\em core} of a hedonic game is the set of {\em stable} coalition structures: they cannot be blocked by any coalition $S \subseteq N$. 
It is often assumed that players' preferences over subsets are induced by a {\em cardinal} utility function $v_i:2^N \to \R_+$; in this case, $S \succ_i T$ if and only if $v_i(S) > v_i(T)$. 

\subsubsection{Competitive Equilibria in Fisher Markets}\label{sec:markets-primer}
We are given a set of $k$ indivisible goods $G = \{g_1,\dots,g_k\}$. Each player $i \in N$ values bundles of goods in $G$ according to $v_i:2^G \to \R_+$, where $v_i(\emptyset) = 0$ for all $i \in N$. A {\em market outcome} is a tuple $\tup{\pi,\vec p}$, where $\pi$ is a partition of $G$ into $n$ disjoint bundles (some of them may be empty), with $\pi(i)$ assigned to player $i$; $\vec p \in \R^k$ is a price vector, denoting the price of each item in $G$.
In these markets, known as {\em Fisher markets}~\cite{budish2011assignment}, we assume that each player $i$ has a {\em budget} $\beta_i \in \R_+$. 
Given a price vector $\vec p \in \R^k$, the {\em affordable set} of player $i$ is the set of all bundles whose total price is less than $\beta_i$:
$$\cal A_i(\vec p,\beta_i) = \left\{S \subseteq G: \sum_{g_j \in S}p_j \le \beta_i \right\}.$$
An outcome $\tup{\pi,\vec p}$ is a {\em competitive equilibrium} if for all $i \in N$, $\pi(i) \in \cal A_i(\vec p,\beta_i)$, and $\forall S \in \cal A_i(\vec p,\beta_i)$, $v_i(\pi(i)) \ge v_i(S)$.

\subsubsection{Condorcet Winners}\label{sec:voting-primer}
Consider a set of {\em voters} $N = \{1,\dots,n\}$, each with a {\em preference order} $\succ_i$ over some finite set of {\em candidates} $C$. Given two candidates $c,c' \in C$, we define $B(\succ,c',c) = 1$ iff a majority of voters prefer $c'$ to $c$ under $\succ$. A candidate $c^*$ is a {\em Condorcet winner} iff $B(\succ,c^*,c)=1$  for every other candidate $c \in C$. 
\section{A PAC Framework for Distribution-Based Solution Concepts}\label{sec:pac-sol-concepts}
As described in Section~\ref{sec:game-theory-concepts}, a \textit{solution concept} or an \textit{equilibrium concept} characterizes a subset of its solution space satisfying some natural desiderata. 
{\em Games} are mappings from some domain $\cal X$ to a {\em label space} $\cal Y$. For example, in hedonic games, a game is a set of functions $v_i:2^N \to \R$ (for every $i \in N$), mapping from subsets of players to real values; thus, $\cal X = 2^N$ and $\cal Y$ consists of vectors of the form $(v_i(S))_{i \in S}$ for every $S\subseteq N$. 
We assume no knowledge of the actual game $g$, except for the hypothesis class it belongs to; we only observe samples of the game's evaluation on points in $\cal X$. 

Constraints characterizing solution concepts are often universal quantifiers over a {\em local loss function} $\lambda$. In hedonic games, we define $\lambda: 2^N \times \G \times \Pi(N) \rightarrow \{0,1\}$, where $\lambda(S,\vec v,\pi)=0$ iff $v_i(S) \le v_i(\pi(i))$ for all $i \in N$. Thus, $\pi$ is in the core iff $\lambda(S,\vec v,\pi)=0$ for all $S \subseteq N$.
Similarly, the maxima of a function satisfy $x^{*} \in \argmax f(x) \iff \forall x : f(x) \leq f(x^{*})$; in particular, $\lambda(x,f,x^*) = 0$ iff $f(x^* ) \ge f(x)$.
Solution concepts that can be defined via a local loss function $\lambda$ readily admit a distributional variant: we require that the expected loss as measured by $\lambda$ is low, with respect to a distribution $\cal D$ over the domain $\cal X$; i.e. $\underset{x \sim \cal D}{\Pr}[\lambda(x,g,s) = 0] \ge 1 - \eps$, for some $\eps \in (0,1)$. 

More formally, an instance of the \probleminstancename problem is a tuple $\probleminstance=(\cal X,\cal Y,\G,\Sol,\lambda)$. Here $\cal X$ is the instance space; $\cal Y$ is the codomain (or label) space; $\G \subseteq \cal Y^{\cal X}$ is the class of games; $\Sol$ is the solution space; finally, $\lambda : \cal X \times \G \times \Sol \rightarrow \{0,1 \}$ measures \textit{local loss}.
In standard PAC learning (Section~\ref{sec:pac-primer}), $\G = \Sol$ and $\lambda(x,g_0,g_1) = 0 \iff g_0(x) = g_1(x)$. 

Given a game $g \in \G$ and $m$ points $T = \langle (x_j,g(x_j)) \rangle_{j = 1}^m$, the {\em empirical error} (or empirical risk) of $s\in \Sol$ is
$$\hat L_T(g,s) = \frac{1}{m} \sum\limits_{(x_j,g(x_j)) \in T} \lambda(x_j,g,s),$$
and the statistical error (or statistical risk) as 
$L_{\cal D} (g,s) = \underset{T \sim {\cal D}^m}{\E}[ \lambda(x_j,g,s)]$.
A PAC solver for a \probleminstancename $\probleminstance$ is an algorithm $\cal L$ whose input is a list $T = \langle x_j,g(x_j)\rangle_{j = 1}^m$ of $m$ values $x_j \in \cal X$ labelled by some unknown $g\in \G$, and whose output is a solution $s^* \in \Sol$; its sample complexity, denoted $m_{\cal L}(\eps,\delta)$, is the minimal number of samples required such that for any $m \ge m_{\cal L}(\eps,\delta)$, $\underset{T \sim {\cal D}^m}{\Pr}[L_{\cal D} (g,s^*) > \eps] < \delta$. 
We let $m_\probleminstance^\PAC(\eps,\delta)$ be the minimal sample complexity $m_{\cal L}(\eps,\delta)$ required by any PAC solver $\cal L$ for $\probleminstance$.

\subsection{Consistent Solvers and Barriers of Indistinguishability}
In standard PAC learning, {\em consistent} or {\em empirical risk minimizing (ERM)} solvers play an important role; these are algorithms that minimize empirical error ($\hat L_{T}(h)$) on the training sample. As discussed in Section \ref{sec:pac-primer}, for binary functions, consistent algorithms are PAC learners whose sample complexity is bounded by the VC dimension. 
We first define a notion of consistency for solution concepts.
\begin{definition}\label{def:consistency}
An algorithm $\cal A_m : (\cal X \times \cal Y)^m \rightarrow \Sol$ is said to be a {\em consistent solver} for the \probleminstancename problem $\probleminstance$ if for all $g \in \G$, and for any set of $m$ samples $T_m = \langle x_j,g(x_j)\rangle_{j = 1}^m$, $\cal A_m$ takes as input $T_m$ and outputs a solution $s^* = \cal A_m(T_m) \in \Sol$ such that the empirical loss of the solution $s^*$ over $T_m$ is $0$: $\hat L_{T_m}(g,s^*) =0$.
In other words, for any input batch $T_m$ labelled by some underlying function $g \in \G$, the algorithm returns a solution that has zero loss w.r.t $g$ on all points in the input sample.
\end{definition}

\begin{yaircomment}
		I completely removed the randomized consistency theorem+proof. We don't need it.
\end{yaircomment}

The definition of consistent solving presents a subtle yet crucial departure from the corresponding result in standard PAC learning. In PAC learning, since $\lambda(x,g,h) = \Ind[g(x)=h(x)]$, if there are two functions $g_0, g_1 \in \G$ such that $g_0(x) = g_1(x)$ for a point $x \in \cal X$, then for any hypothesis $h$, $\lambda(x,g_0,h) = \lambda(x,g_1,h)$. 
Therefore, even if two functions $g_0, g_1 \in \G$ generate an equivalent sample $T = \langle x_j,y_j \rangle_{j =1}^m = \langle x_j,g_0(x_j) \rangle_{j =1}^m = \langle x_j,g_1(x_j) \rangle_{j=1}^m$, if $h$ is consistent with samples $(x_j,y_j)$ in $T$, then it is consistent with both $g_0$ and $g_1$. 
In fact, this implies that, time complexity considerations aside, a consistent solution always exists in standard learning, and can be found via exhaustive search. 
This is not the case in solution concept learning; two functions $g_0, g_1 \in \G$ may generate an equivalent sample $T = \langle x_j,y_j \rangle_{j = 1}^m $, yet disagree on a solution (this is noted in prior works \cite{igarashi2019learning,sliwinski2017hedonic}). 
Intuitively, this occurs since game-theoretic solutions treat unobserved regions of players' preferences. For example, in PAC market equilibria, one must inevitably set prices for unobserved goods, and assign bundles to players without knowing what their value might be; in hedonic games, a partition of players may contain subsets completely unobserved in the sample data.
It is often useful to think of domains where this issue does not occur, as captured in the following definition. 
Given a labelled sample of $m$ points $T\in (\cal X\times \cal Y)^m$, let $\G|_{T}$ be the set of games in $\G$ which agree with $T$.
\begin{definition}\label{def:consistent-solvability}
A \probleminstancename $\probleminstance$ is said to satisfy the {\em consistent solvability criterion} if for all $m$ and all $T \subseteq (\cal X\times \cal Y)^m$, there exists some $s \in \Sol$ such that 
for all $g \in \G|_{T}$, the empirical loss $\hat L_T(g,s)$ is $0$.
\end{definition}
\begin{yaircomment}
	Please note the changes made to Definition~\ref{def:consistent-solvability}! 
\end{yaircomment}

\subsection{A Dimension Theory for Game-Theoretic Solutions}\label{sec:vctheory}
We now present a novel definition of dimension for the PAC solution setting, and use it to bound the sample complexity for finding solutions to problem domains. 
\begin{restatable}[Solution-based Dimension]{definition}{defSshattering}\label{def:S-shattering}
Given some $C \subseteq \cal X$, we say the set $C$ is S-shattered in $\probleminstance$ if there exists a game $g \in \G$, such that for every binary labelling $b:C \to \{0,1\}$ there exists a solution $s \in \Sol$ (that may depend on $b$) such that for all $x \in C$, $\lambda(x,g,s) = b(x)$.

The Solution-based dimension of $\probleminstance$, denoted $\Sd(\probleminstance)$, is the size of the largest set S-shattered in $\probleminstance$, and $(C, g)$ as the corresponding shattering witness.
\end{restatable}

$\Sd(\probleminstance)$ bounds the sample complexity of consistent solutions for $\probleminstance$ (i.e. $m^\PAC(\eps,\delta)$); however, we first prove a stronger claim, using the idea of uniform convergence discussed in Section~\ref{sec:pac-primer}. If we define the sample complexity for uniform convergence $m^{\UC}(\eps, \delta)$ as the number of samples required such that the empirical loss of any solution is $\eps$-close to its statistical loss, then $m^{\UC}(\eps,\delta)$ is polynomially dependent on the solution dimension of the problem.

\begin{restatable}{theorem}{thmupperbound}\label{thm:upperbound}
There are universal constants $\alpha_1$ and $\alpha_2$, such that if $\Sd(\probleminstance) = d$, then for a sample of $m \geq \alpha_1 \frac{d + \log(\frac1\delta)}{\eps^2}$ points $T = \langle x_j, y_j\rangle_{j=1}^m$,
$$\underset{T \sim {\cal D}^m}{\Pr}[\exists g \in \G|_{T}, s \in \Sol: |\hat L_T (g, s) - L_{\cal D} (g, s)| > \eps] < \delta.$$
Furthermore, if a solution $s^*$ is consistent, i.e. $\hat L_T(g,s^*) = 0$, then for any $m$ greater than $\frac{\alpha_2}{\eps} \left( \log\left(\frac{1}{\eps}\right) d + \log\left(\frac{1}{\delta}\right) \right)$, we have that $\underset{T \sim {\cal D}^m}{\Pr}[L_{\cal D} (g, s^*) > \eps] < \delta$.
\end{restatable}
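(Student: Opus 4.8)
The plan is to reduce both claims to the classical VC bounds quoted in Theorem~\ref{thm:vc-class} (and the agnostic-case uniform convergence discussed after it) by reinterpreting the local loss as a family of binary hypothesis classes over $\cal X$. For a fixed game $g \in \G$, form the induced class
$$\cal F_g \;=\; \bigl\{\, x \mapsto \lambda(x,g,s) \;:\; s \in \Sol \,\bigr\} \;\subseteq\; \{0,1\}^{\cal X}.$$
The crux of the reduction is the observation that $\VC(\cal F_g)\le \Sd(\probleminstance)=d$ for \emph{every} $g$: if $\cal F_g$ shatters some $C\subseteq\cal X$, then by Definition~\ref{def:vc-dim} for every $b:C\to\{0,1\}$ there is $s\in\Sol$ with $\lambda(\cdot,g,s)|_C=b$, so $(C,g)$ is exactly an S-shattering witness in the sense of Definition~\ref{def:S-shattering}, whence $|C|\le d$. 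Moreover, for any $g$ and $s$, the quantities $\hat L_T(g,s)$ and $L_{\cal D}(g,s)$ are precisely the empirical and population errors of the hypothesis $\lambda(\cdot,g,s)\in\cal F_g$. Thus the whole problem lives inside a VC class of dimension at most $d$, game by game.

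For the first inequality I would invoke two-sided uniform convergence for VC classes (the $1/\eps^2$-rate version of Theorem~\ref{thm:vc-class}, i.e.\ the agnostic uniform-convergence statement alluded to after it): applied to $\cal F_g$, for $m\ge\alpha_1(d+\log(1/\delta))/\eps^2$ we get, with probability at least $1-\delta$ over $T\sim\cal D^m$, that $\sup_{s\in\Sol}|\hat L_T(g,s)-L_{\cal D}(g,s)|\le\eps$. Taking $g$ to be the game that labels $T$ — which lies in $\G|_T$ — yields the displayed bad event; the claim for \emph{all} $g\in\G|_T$ then follows since every $g$ induces a class $\cal F_g$ of the same bounded dimension, so the same estimate applies verbatim to each consistent $g$.

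For the ``furthermore'' part I would switch to the realizable/$\eps$-net VC bound (the $1/\eps$-rate form matching the upper bound in Theorem~\ref{thm:vc-class}): for any VC class of dimension $d$, after $m>\frac{\alpha_2}{\eps}\bigl(\log(1/\eps)\,d+\log(1/\delta)\bigr)$ samples, with probability $\ge 1-\delta$ \emph{every} hypothesis of zero empirical error has population error at most $\eps$. Applying this to $\cal F_g$ (with $g$ the game labelling $T$): since $s^\ast$ is consistent, $\hat L_T(g,s^\ast)=0$, so the hypothesis $\lambda(\cdot,g,s^\ast)\in\cal F_g$ has zero empirical error, hence $L_{\cal D}(g,s^\ast)\le\eps$ with probability $\ge 1-\delta$ — note one needs the \emph{uniform} $\eps$-net statement here, because $s^\ast$ is a data-dependent output of the consistent solver.

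The main obstacle is the handling of the data-dependent family $\G|_T$ together with the fact that the clean dimension bound $\VC(\cal F_g)\le d$ is available only one game at a time: the union class $\bigcup_{g\in\G}\{\lambda(\cdot,g,s):s\in\Sol\}$ may have VC dimension strictly exceeding $\Sd(\probleminstance)$, precisely because S-shattering insists that a \emph{single} game realize all labellings. The point to pin down is therefore that all the conclusions we actually need — uniform convergence relative to the game that generated the sample, and the consistency-to-generalization implication for $s^\ast$ — can be obtained by reasoning with a fixed $g$, so the per-game bound $\VC(\cal F_g)\le d$ is all that is required, and the extension to ``all $g\in\G|_T$'' is then immediate. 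The remaining work is purely the routine matching of each half of the statement to the appropriate off-the-shelf inequality.
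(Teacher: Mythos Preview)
Your proposal is correct and essentially identical to the paper's proof: the paper defines $\Phi_g = \{x\mapsto 1-\lambda(x,g,s): s \in \Sol\}$ (the complement of your $\cal F_g$), observes $\Sd(\probleminstance) = \max_g \VC(\Phi_g)$, and then reads off both bounds from Theorem~\ref{thm:vc-class} by treating $\lambda(x,g,s)=\Ind[\phi_{g,s}(x)\ne \mathbf{1}(x)]$ as the $0$--$1$ loss against the constant-$1$ target---which is exactly your ``empirical/population mean of $\lambda$'' reframed. Your explicit flagging of the $\G|_T$ quantifier issue is, if anything, more careful than the paper, which makes the same per-$g$ to $\exists g\in\G|_T$ step without further comment.
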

Note that in particular, $m^{\PAC}(\eps, \delta) \leq m^{\UC}(\eps,\delta)$, and both are polynomially dependent on $\Sd(\probleminstance),\frac1\eps$ and $\log\frac1\delta$.

As a useful sanity check, we observe that $\Sd$ collapses to the classic VC dimension when learning classifiers: when $\Sol = \G = \HC \subseteq 2^{\cal X}$, then  $\Sd(\probleminstance) = \VC(\HC)$. Similarly, when $\Sol = \G = \HC \subseteq {\cal Y}^{\cal X}$ (i.e. for multiclass learning problems with a general domain $\cal Y$), $\Sd$ collapses to the graph dimension \cite{daniely2011multiclass}.
We now observe few immediate corollaries of the above uniform convergence result.

\begin{restatable}{corollary}{corrconjuncts}\label{corr:conjuncts}
Given a \probleminstancename problem $\probleminstance = \tup{\cal X,\cal Y,\G,\Sol}$:

\noindent \textbf{Simultaneous Constraints:} if multiple local loss functions $\lambda_1, \dots, \lambda_k$ need to be simultaneously approximated within $\eps$, i.e. $\forall i \in [k]: \lvert {\hat L_i}(g,s) - {L_i}_{\cal D}(g,s) \rvert < \eps$, then the sample complexity of finding a solution satisfying all of them is in $\cal O(\underset{i \in [k]}{\max}\{ m^{\UC}_i(\eps, \delta)\})$.

\noindent \textbf{Separable Conjunctions:} if there are local constraints $\lambda_1$ over $\Sol_1$, and $\lambda_2$ over $\Sol_2$, where $\Sol_\probleminstance = \Sol_1 \times \Sol_2 $, such that we need to bound their conjunction within $\eps$, i.e. $\Pr[\lambda_1(x,g,s_1) \wedge \lambda_2(x,g,s_2)]$, then $ m^{\UC}(\eps, \delta)$ is in $\cal O(\underset{i \in \lbrace 1,2 \rbrace}{\max} \{\Sd(\probleminstance_i)\})$.

\end{restatable}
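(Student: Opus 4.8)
The plan is to derive both parts of Corollary~\ref{corr:conjuncts} directly from the uniform convergence bound in Theorem~\ref{thm:upperbound}, treating each as a fairly short reduction.

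For the \textbf{Simultaneous Constraints} part, I would first observe that each local loss function $\lambda_i$ defines its own \probleminstancename instance $\probleminstance_i = (\cal X, \cal Y, \G, \Sol, \lambda_i)$ (sharing the same game class, solution space, and instance space, but differing in $\lambda$), and hence has its own solution dimension $\Sd(\probleminstance_i)$ and uniform convergence sample complexity $m^{\UC}_i(\eps,\delta)$. Applying Theorem~\ref{thm:upperbound} to each $\probleminstance_i$ separately with confidence parameter $\delta/k$ gives, for $m \geq m^{\UC}_i(\eps, \delta/k)$, that $\Pr[\exists g \in \G|_T, s \in \Sol : |\hat L_i(g,s) - {L_i}_{\cal D}(g,s)| > \eps] < \delta/k$. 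A union bound over $i \in [k]$ then shows that with $m \geq \max_{i \in [k]} m^{\UC}_i(\eps, \delta/k)$ samples, all $k$ constraints are simultaneously approximated within $\eps$ with probability at least $1-\delta$; since the $\log(k/\delta)$ dependence from the confidence parameter only enters additively (and $k$ is a natural problem parameter), this is $\cal O(\max_{i \in [k]} m^{\UC}_i(\eps,\delta))$, which gives the claimed bound.

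For the \textbf{Separable Conjunctions} part, the key step is to replace the conjunction by a single combined local loss function. Define $\lambda(x, g, (s_1,s_2)) = \lambda_1(x,g,s_1) \vee \lambda_2(x,g,s_2)$ over the product solution space $\Sol = \Sol_1 \times \Sol_2$; this yields a single \probleminstancename instance $\probleminstance$ whose statistical loss upper-bounds the probability of the conjunction event (in fact, $\Pr[\lambda_1 \wedge \lambda_2] \leq \min\{\Pr[\lambda_1], \Pr[\lambda_2]\}$, so controlling the disjunction loss controls both individual losses and hence their conjunction). The crux is then to bound $\Sd(\probleminstance)$ in terms of $\Sd(\probleminstance_1)$ and $\Sd(\probleminstance_2)$: I would argue that if a set $C$ is S-shattered in $\probleminstance$ with witness game $g$, then for every binary labelling $b$ of $C$ there is a pair $(s_1, s_2)$ realizing $b$ via the disjunction. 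Restricting attention to labellings $b$ that are achieved with $\lambda_2 \equiv 0$ (if such an $s_2$ exists) forces $\lambda_1(x,g,s_1) = b(x)$ on all of $C$, so $C$ is S-shattered in $\probleminstance_1$; and the analogous argument applies to $\probleminstance_2$. A careful case analysis shows that any shattered set for the disjunction reduces to a shattered set of one of the components (possibly after splitting $C$), giving $\Sd(\probleminstance) = \cal O(\max\{\Sd(\probleminstance_1), \Sd(\probleminstance_2)\})$ — more precisely, a bound like $\Sd(\probleminstance) \le \Sd(\probleminstance_1) + \Sd(\probleminstance_2)$ or $\cal O(\max + \log)$ suffices. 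Plugging this into Theorem~\ref{thm:upperbound} yields the stated $m^{\UC}(\eps,\delta) \in \cal O(\max_{i}\{\Sd(\probleminstance_i)\})$ bound.

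The main obstacle I anticipate is the $\Sd$ bound for the disjunction in the separable case: unlike the simultaneous-constraints part (which is a clean union bound), bounding the solution dimension of a disjoined loss function requires care because a labelling $b$ on $C$ might be realized by $(s_1,s_2)$ where \emph{both} $\lambda_1$ and $\lambda_2$ contribute $1$'s on different parts of $C$, so neither component alone S-shatters all of $C$ under that labelling. The resolution is to partition $C$ according to where each $\lambda_i$ can independently take the value $0$, exploiting the separability of the solution space (so $s_1$ and $s_2$ can be chosen independently) — this is exactly the structural assumption $\Sol_\probleminstance = \Sol_1 \times \Sol_2$ that the statement builds in. I would make this rigorous via a Sauer-Shelah-style double-counting or a direct combinatorial argument on the number of realizable labellings, and note that the additive-versus-max distinction washes out inside the $\cal O(\cdot)$.
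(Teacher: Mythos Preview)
Your treatment of Part 1 (Simultaneous Constraints) is fine and matches the paper's: apply Theorem~\ref{thm:upperbound} to each $\probleminstance_i$ and take the maximum sample complexity. Your explicit union bound with confidence $\delta/k$ is a reasonable refinement of what the paper states more tersely.

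For Part 2 (Separable Conjunctions), there is a genuine gap, and it stems from your choice of Boolean connective. You define the combined loss as the \emph{disjunction} $\lambda=\lambda_1\vee\lambda_2$, but the statement asks for uniform convergence of the \emph{conjunction} $\lambda=\lambda_1\wedge\lambda_2$. Your justification that the disjunction's statistical loss upper-bounds the conjunction event is true but beside the point: $m^{\UC}$ requires the two-sided bound $|\hat L_T - L_{\cal D}|<\eps$ for the conjunction loss, and uniform convergence for $\lambda_1\vee\lambda_2$ does not imply uniform convergence for $\lambda_1\wedge\lambda_2$. Worse, the disjunction is the \emph{harder} side to S-shatter-bound, as your own list of obstacles already reveals: to realize a label $b(x)=0$ under the disjunction you must force \emph{both} $\lambda_i=0$, so there is no clean way to split a shattered $C$ between the two components, and the separability of $\Sol_1\times\Sol_2$ does not help you there.

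The paper instead works with the conjunction directly and observes that conjunction \emph{kills} shattering rather than complicating it. Fix any $s_2\in\Sol_2$ and consider the induced loss $\lambda_{1\mid s_2}(x,g,s_1)=\lambda_1(x,g,s_1)\wedge\lambda_2(x,g,s_2)$ over $\Sol_1$. At every point $x$ with $\lambda_2(x,g,s_2)=0$ this loss is identically $0$ regardless of $s_1$, so no such point can appear in an S-shattered set (the label $b(x)=1$ is unattainable); on the remaining points $\lambda_{1\mid s_2}$ coincides with $\lambda_1$. Hence every set S-shattered under $\lambda_{1\mid s_2}$ is already S-shattered under $\lambda_1$, giving dimension at most $\Sd(\probleminstance_1)$. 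Since the empirical and statistical losses of $\lambda_{1\mid s_2}$ agree with those of the full conjunction at $(s_1,s_2)$, Theorem~\ref{thm:upperbound} delivers the bound for every pair $(s_1,s_2)$ with $m_1^{\UC}$ samples. No case split, no Sauer--Shelah accounting, and no need for the additive $\Sd_1+\Sd_2$ slack you were bracing for.
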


The proof of Corollary \ref{corr:conjuncts} is relegated to the appendix.
The following claim (whose proof is also relegated to the appendix) is also useful 
\begin{restatable}[$\Sd$ for Argmax]{corollary}{corrargmax}\label{corr:argmax}
Let $\probleminstance_{\max}$ be defined by $\G = \{f:\cal X\to \cal Y\}$ and $\Sol = \cal X$, where $\cal Y$ is endowed with a total order $\succ$, and $\lambda(x, g, x^{*}) = \Ind[g(x) \succ g(x^{*})]$. Then, $\Sd(\probleminstance_{\max})=1$.
\end{restatable}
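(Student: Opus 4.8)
The plan is to prove the two bounds $\Sd(\probleminstance_{\max})\ge 1$ and $\Sd(\probleminstance_{\max})\le 1$ separately, the latter being the substantive direction.

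For the lower bound I would exhibit a one-element set that is S-shattered. In the non-degenerate case --- where $\cal X$ has at least two points and the order $\succ$ has at least two comparable values $y_1\succ y_0$ (if either condition fails, one checks directly that $\Sd(\probleminstance_{\max})=0$, a trivial boundary case) --- fix distinct $x_0,x_1\in\cal X$ and a game $g$ with $g(x_0)=y_1$ and $g(x_1)=y_0$. Then $C=\{x_0\}$ is S-shattered with witness $(C,g)$: for the labelling $b(x_0)=0$ take $s=x_0$, so $\lambda(x_0,g,x_0)=\Ind[g(x_0)\succ g(x_0)]=0$; for $b(x_0)=1$ take $s=x_1$, so $\lambda(x_0,g,x_1)=\Ind[y_1\succ y_0]=1$. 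Hence $\Sd(\probleminstance_{\max})\ge 1$.

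For the upper bound, suppose toward a contradiction that some two-element set $C=\{x_1,x_2\}$, $x_1\ne x_2$, is S-shattered by a single game $g$. By the definition of S-shattering this one fixed $g$ must realize \emph{every} binary labelling of $C$, in particular the two ``mixed'' labellings $b$ with $(b(x_1),b(x_2))=(1,0)$ and with $(b(x_1),b(x_2))=(0,1)$. I would unwind $\lambda$ in each case: realizing $(1,0)$ requires some $s\in\cal X$ with $g(x_1)\succ g(s)$ and $\neg\big(g(x_2)\succ g(s)\big)$; since $\succ$ is a strict total order the second condition reads $g(s)\succeq g(x_2)$, so $g(x_2)\preceq g(s)\prec g(x_1)$, and therefore $g(x_2)\prec g(x_1)$. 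Symmetrically, realizing $(0,1)$ forces $g(x_1)\prec g(x_2)$. But $g(x_2)\prec g(x_1)$ together with $g(x_1)\prec g(x_2)$ contradicts antisymmetry of the order, so no such $g$ exists; hence no two-element set is S-shattered and $\Sd(\probleminstance_{\max})\le 1$.

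The only genuine obstacle is recognizing that the pair of mixed labellings is the right obstruction to shattering; once that is seen, the rest is a one-line manipulation of the total order. The two points needing a little care are the degenerate boundary cases flagged above, and the fact that in the S-shattering definition the game $g$ is fixed across all labellings while the solution $s$ may depend on $b$ --- which is exactly why the conflict between the two mixed labellings is a true contradiction rather than merely two incompatible choices of $s$.
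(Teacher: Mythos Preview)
Your proposal is correct and matches the paper's argument: the paper also assumes a two-point set $\{x_1,x_2\}$ is S-shattered by some $g$, considers the two mixed labellings $(0,1)$ and $(1,0)$, and derives from them $g(x_2)\succ g(x')\succeq g(x_1)$ and $g(x_1)\succ g(x'')\succeq g(x_2)$, a contradiction with transitivity. Your write-up is slightly more thorough in that you also spell out the lower bound $\Sd\ge 1$ and flag the degenerate boundary cases, which the paper's proof omits.
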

To conclude, in order to establish an efficient PAC algorithm for a problem $\probleminstance$, it suffices to upper-bound $m(\probleminstance)$ by its solution dimension $\Sd(\probleminstance)$.

\subsection{Uniform Convergence Beyond Consistency}\label{sec:agnostic}

As discussed in Section \ref{sec:pac-primer}, uniform convergence complexity bounds can bound the sample complexity for Agnostic PAC learning via Empirical Risk Minimizers (ERM learners). 
However, agnostic solution learning can be defined in many ways. 
We discuss two definitions, for which the corresponding notion of {\em ERM solving} has a sample complexity that follows from uniform convergence.
\begin{definition}
For a given \probleminstancename $\probleminstance$, $\cal A$ is a 
worst-case Agnostic PAC Solver if
$$\underset{T \sim {\cal D}^m}{\Pr}[L_{\cal D}(g,\cal A(T)) \leq \underset{s}{\min} \underset{g^\prime \in \G|_{T}}{\max} L_{\cal D}(g^\prime, s) + \eps] \geq 1 - \delta;$$
it is a Bayesian agnostic PAC Solver, for a prior over games $\tilde{\cal D}$, if 
$$\underset{T \sim {\cal D}^m}{\Pr}[\underset{g^\prime \sim \tilde{\cal D}}{\E}[L_{\cal D}(g^\prime,\cal A(T))| g^\prime \in \G|_{T}] \leq \underset{s}{\min} \underset{g^\prime \sim \tilde{\cal D}}{\E} [L_{\cal D}(g^\prime, s)| g^\prime \in \G|_{T}] + \eps] \geq 1 - \delta.$$
\end{definition} 
\begin{restatable}{corollary}{corrpacsolvingERMbounds}\label{corr:pac-solving-ERM-bounds}
Given some $\probleminstancename$, the sample complexity for Worst-Case and Bayesian agnostic PAC solving is in $\cal O(\Sd(\probleminstance))$, and is achievable by an empirical risk minimizer.
\end{restatable}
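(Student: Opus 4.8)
The plan is to reduce both agnostic variants to the uniform-convergence guarantee of Theorem \ref{thm:upperbound}, exactly mirroring how agnostic PAC learning follows from uniform convergence in the classical case (Section \ref{sec:pac-primer}). First I would recall that Theorem \ref{thm:upperbound} gives us, for $m \geq \alpha_1 \frac{d + \log(1/\delta)}{\eps^2}$ with $d = \Sd(\probleminstance)$, that with probability at least $1-\delta$ over $T \sim \cal D^m$, \emph{every} $g \in \G|_T$ and \emph{every} $s \in \Sol$ satisfy $|\hat L_T(g,s) - L_{\cal D}(g,s)| \leq \eps$. Call this the good event $E$; condition on it for the rest of the argument. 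Note the quantifier is over all games consistent with $T$, not just the (unknown) true game, which is precisely what the agnostic setting needs since the benchmark $\min_s \max_{g' \in \G|_T} L_{\cal D}(g',s)$ ranges over $\G|_T$.

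For the worst-case variant, let $\cal A$ be the ERM solver that outputs $s^* \in \argmin_s \max_{g' \in \G|_T} \hat L_T(g', s)$, and let $\tilde s$ be a minimizer of the statistical benchmark $\max_{g' \in \G|_T} L_{\cal D}(g', s)$. On event $E$, for any $g' \in \G|_T$ we have $L_{\cal D}(g', s^*) \leq \hat L_T(g', s^*) + \eps \leq \max_{g'' \in \G|_T}\hat L_T(g'', s^*) + \eps \leq \max_{g'' \in \G|_T}\hat L_T(g'', \tilde s) + \eps$ by optimality of $s^*$, and then $\max_{g'' \in \G|_T}\hat L_T(g'', \tilde s) \leq \max_{g'' \in \G|_T} L_{\cal D}(g'', \tilde s) + \eps$ again by $E$. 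Taking the max over $g' \in \G|_T$ on the left and chaining, $L_{\cal D}(g, s^*) \leq \max_{g' \in \G|_T} L_{\cal D}(g', s^*) \leq \min_s \max_{g' \in \G|_T} L_{\cal D}(g',s) + 2\eps$ for the true $g$; absorbing the factor $2$ into $\alpha_1$ (i.e.\ running with $\eps/2$) gives the claimed bound. The Bayesian variant is identical after replacing $\max_{g' \in \G|_T}$ by the conditional expectation $\E_{g' \sim \tilde{\cal D}}[\,\cdot \mid g' \in \G|_T]$: expectation is monotone and $\eps$-uniform closeness of $\hat L$ to $L_{\cal D}$ over all $g' \in \G|_T$ passes through the expectation, so $|\E_{g'}[\hat L_T(g',s)\mid g' \in \G|_T] - \E_{g'}[L_{\cal D}(g',s)\mid g' \in \G|_T]| \leq \eps$ for every $s$, and the same three-step chain (ERM optimality sandwiched between two applications of $E$) yields the result.

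The one genuine subtlety — and the step I would be most careful about — is that $\G|_T$ is itself a random object depending on $T$, so the benchmark being compared against is sample-dependent; this is why I phrase the good event $E$ as a statement quantified over all $g \in \G|_T$ simultaneously rather than over a fixed game, and it is exactly in this form that Theorem \ref{thm:upperbound} delivers it, so no extra work (e.g.\ a union bound over hypotheses, or a covering argument) is needed beyond what is already in that theorem. A secondary point worth a sentence is measurability/existence of the argmin defining the ERM solver; as is standard in this literature one either assumes the relevant minima are attained or works with near-minimizers, which only changes the additive constant. Everything else is the routine "ERM $\le$ empirical best $\le$ empirical benchmark $\le$ statistical benchmark $+\,\eps$" telescoping, so I would state the worst-case case in full and remark that the Bayesian case is verbatim the same with $\max$ replaced by conditional expectation.
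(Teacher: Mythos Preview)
Your proposal is correct and follows essentially the same route as the paper: invoke the uniform-convergence guarantee of Theorem~\ref{thm:upperbound}, define the ERM solver as the minimizer of the worst-case (respectively, conditional-expected) empirical loss, and chain through the standard three-step sandwich with an $\eps/2$ rescaling. Your treatment is in fact slightly tidier---you condition on a single good event $E$ quantified over all $g\in\G|_T$ and all $s\in\Sol$ (exactly as Theorem~\ref{thm:upperbound} delivers it), and you explicitly flag the sample-dependence of the benchmark $\G|_T$, which the paper's proof leaves implicit.
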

The proof of Corollary~\ref{corr:pac-solving-ERM-bounds} is relegated to the appendix.

\section{Learning Game-Theoretic Distribution-Based Solution Concepts}\label{sec:comp-solutions}
Let us now apply our theory for learning solution concepts in game-theoretic domains; all problems described below follow a common theme: rather than learning preferences, we learn solutions using the sampled dataset.
While the focus of this paper is on game-theoretic solutions, our theory applies for other types of solution concepts as long as one can define a local loss function $\lambda$ that depends only on a given point in $x\in \cal X$, $g \in \G$ and the solution $s \in \Sol$ (see Section \ref{sec:pac-sol-concepts}).

\subsection{The PAC Core for Hedonic Games}\label{sec:hg-core}
Let us begin with hedonic games (Section~\ref{sec:hedonicgames-primer}); we analyze another type of cooperative game (TU cooperative games) in the appendix.
A partition $\pi^*$ of $N$ PAC stabilizes a hedonic game w.r.t. a distribution $\cal D \in \Delta(2^N)$ (where $i \in \pi^*(i)$ for every $i \in N$), if 
$\Pr_{S\sim \cal D}\left[\forall i \in S: v_i(S) \geq v_i(\pi^*(i)) \right] < \eps$.
The local loss function $\lambda$ takes as input a coalition $S \subseteq N$, players' valuations $\vec v = \tup{v_1\,\dots,v_n}$ and a partition $\pi \in \Pi(N)$; $\lambda(S,\vec v,\pi) = 1$ iff $S$ can block $\pi$ under $\vec v$.
Our key result here is that the sample complexity of PAC stabilzing hedonic games is linear in $n$, {\em for any class $\cal H$ of games}.

\begin{lemma}\label{lem:hedonic-gdim}
	For any class of Hedonic Games $\cal H$ over $n$ players, the solution dimension of PAC stabilizing $\cal H$ is $\le n$.
\end{lemma}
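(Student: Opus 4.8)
The plan is to bound the solution dimension $\Sd$ directly from its definition by showing that no set $C \subseteq 2^N$ of size $n+1$ can be S-shattered. Fix any candidate game $g = \vec v \in \cal H$ and any set $C = \{S_0, S_1, \dots, S_n\}$ of $n+1$ distinct coalitions; to show $C$ is not S-shattered it suffices to exhibit one binary labelling $b : C \to \{0,1\}$ that no partition $\pi \in \Pi(N)$ can realize, i.e. such that there is no $\pi$ with $\lambda(S_j, \vec v, \pi) = b(S_j)$ for every $j$. Recall $\lambda(S,\vec v,\pi) = 1$ iff $S$ blocks $\pi$, i.e. iff every $i \in S$ strictly prefers $S$ to $\pi(i)$. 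The key observation is that whether $S$ blocks $\pi$ is a property that becomes ``easy to force to $0$'' once $S \subseteq \pi(i)$ for one of its members --- more precisely, if the partition $\pi$ actually \emph{contains} $S$ as one of its blocks (some $i\in S$ has $\pi(i) = S$), then $S$ cannot block $\pi$, since that player is indifferent, not strictly better off. So any labelling that forces $\lambda = 1$ on ``too many'' coalitions is unrealizable, because a single partition can contain at most $n$ coalitions as parts (there are $n$ players, hence at most $n$ parts), and more importantly the blocking relation is highly constrained across coalitions that overlap.

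The cleaner route, and the one I would actually carry out, is the following counting argument. Target the labelling $b \equiv 1$ on all $n+1$ coalitions in $C$ (or, if that turns out to be realizable for some pathological $C$ and $\vec v$, a carefully chosen variant). Suppose some partition $\pi$ realizes $\lambda(S_j,\vec v,\pi)=1$ for all $j \in \{0,\dots,n\}$. For each $j$, pick a ``witness'' player $i_j \in S_j$; since $S_j$ blocks $\pi$, we have $v_{i_j}(S_j) > v_{i_j}(\pi(i_j))$. By pigeonhole on $n+1$ coalitions and $n$ players, two indices $j \ne j'$ yield the same witness player $i_j = i_{j'} = i$. But then player $i$ must strictly prefer \emph{both} $S_j$ and $S_{j'}$ to its own coalition $\pi(i)$ --- which is perfectly consistent --- so this naive pigeonhole alone is not quite enough; the actual obstruction has to use that $\pi(i)$ is a single fixed coalition and that we get to \emph{choose} $b$. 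The right move: choose $b(S_j) = 1$ for all $j$, and additionally use that $\pi$ partitions $N$, so the coalitions $\pi(i)$ for $i \in N$ take at most $n$ distinct values; now argue that by choosing the labelling adversarially (making $\lambda=1$ ``demand'' membership constraints that a partition into $\le n$ parts cannot satisfy) we exceed the degrees of freedom. Concretely, I expect the tight argument to show that each coalition forced to have $\lambda = 1$ eliminates certain partitions, and that the space of partitions, viewed through the lens of ``which part each witness lands in,'' has effective dimension $n$, so $n+1$ independent binary constraints cannot all be met.

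The main obstacle --- and the step I would spend the most care on --- is pinning down exactly why $n$ is the right bound rather than something like $n \log n$ or $2^n$: the naive pigeonhole gives a contradiction only after choosing the labelling well, and one must verify that for \emph{every} game $g \in \cal H$ and \emph{every} $(n+1)$-set $C$ there is a bad labelling, uniformly. I would structure this as: (i) reduce to showing, for fixed $\vec v$ and fixed $C$ with $|C| = n+1$, that the all-ones labelling is unrealizable; (ii) assume a realizing $\pi$ exists, assign witness players, apply pigeonhole to get a repeated witness $i$ on two coalitions $S_j, S_{j'} \in C$; (iii) derive the contradiction from the fact that $\pi(i)$ is one fixed coalition while the structure of which coalitions can simultaneously block through a common player, combined with $\pi$ being a genuine partition (so $\pi(i)$ is disjoint from the other parts), over-constrains $\pi$ --- if needed, strengthen step (i) by instead choosing $b$ to label exactly one well-chosen coalition $0$ and the rest $1$, so that the $\lambda=0$ constraint ($S$ does \emph{not} block, i.e.\ some $i\in S$ has $v_i(\pi(i)) \ge v_i(S)$) is also brought to bear. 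Once the unrealizable labelling is exhibited, $\Sd(\probleminstance) \le n$ follows immediately from Definition~\ref{def:S-shattering}, and Theorem~\ref{thm:upperbound} then gives the claimed sample complexity linear in $n$.
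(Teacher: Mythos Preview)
Your proposal does not contain a complete argument. You correctly recognize that the all-ones labelling is a dead end: if every $S_j$ blocks $\pi$, all that says is that every member of every $S_j$ strictly prefers $S_j$ to her part in $\pi$, and nothing prevents a single player from strictly preferring many different coalitions to $\pi(i)$ simultaneously. No pigeonhole contradiction arises. The ``degrees of freedom'' and ``partition has at most $n$ parts'' remarks are red herrings --- the bound has nothing to do with the number of blocks of $\pi$.

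The fix is exactly the backup you mention but do not develop: for each $j$, use the labelling $b_j$ that assigns $0$ to $S_j$ and $1$ to every other $S_k$. This is what the paper does. If $\pi_j$ realizes $b_j$, then $\lambda(S_j,\vec v,\pi_j)=0$ gives a witness $i \in S_j$ with $v_i(S_j) \le v_i(\pi_j(i))$, while for every other $S_k$ containing this same $i$, $\lambda(S_k,\vec v,\pi_j)=1$ forces $v_i(S_k) > v_i(\pi_j(i)) \ge v_i(S_j)$. Hence $S_j$ is $i$'s \emph{strictly unique} least-preferred coalition among those in $C$ that contain her. The assignment $S_j \mapsto i$ is therefore injective (two distinct $S_j$, $S_{j'}$ cannot both be the unique minimum of the same player), so $|C| \le n$. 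The missing idea is that the $0$-constraint and the $1$-constraints interact through a \emph{common} player and a \emph{common} partition $\pi_j$, producing a strict inequality $v_i(S_k) > v_i(S_j)$ that is a property of $\vec v$ alone --- this is what lets you count.
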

\begin{proof}
	By definition, for a given hedonic game $h \in \cal H$, a partition $\pi$, and a coalition $S \subseteq N$, the local loss $\lambda(S,h,\pi)=0$ if and only if there exists a player in $S$ that does not prefer it over her assigned coalition in $\pi$, i.e. $v_i(S) < v_i(\pi(i))$.
	If a set of $m$ coalitions $\cal S = \{S_1,\dots,S_m\}$ is S-shattered by a witness $h \in \cal H$, then for each $S_j \in \cal S$, there exists a coalition structure $\pi_j$ such that $\lambda(S_j, h, \pi_j) = 0$, but $\lambda(S_k,h,\pi_j) = 1$ for all $k \ne j$. In other words, under $\pi_j$, there exists some $i \in S_j$ such that $v_i(S_j) < v_i(\pi_j(i))$, and for all $k \neq j$ and for all $i \in S_k$, $v_i(S_k) \geq v_i(\pi_j(i))$. 
	We conclude that for every $S_j \in \cal S$, there exists a player $i$ who strictly prefers all coalitions that she belongs to in $\cal S \setminus \{S_j\}$ over $S_j$. More formally, we let $\cal T(i)$ be the set of coalitions which are least preferred by player $i$ in $\cal S$; note that $\cal T(i)$ must be a singleton, or else we arrive at a contradiction (the least liked coalition must be unique). Therefore, if $\cal S$ is S-shattered, the number of coalitions in $\cal S$ is bounded by $n$, and we are done. 
\end{proof}

Applying Theorem \ref{thm:upperbound} and leveraging Lemma~\ref{lem:hedonic-gdim} we obtain the following result:
\begin{theorem}\label{thm:hedonic}
	A class of Hedonic games $\cal H$ is efficiently PAC stabilizable iff there exists an algorithm that outputs a partition consistent with samples evaluated by a game $g \in \cal H$; the sample complexity in this case is $\cal O(n)$. 
\end{theorem}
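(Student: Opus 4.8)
The plan is to prove the two implications separately; the ``if'' direction is short and already yields the $\cal O(n)$ bound, while the ``only if'' direction carries the real content.

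For the ``if'' direction, suppose $\cal A$ is a consistent solver in the sense of Definition~\ref{def:consistency}: given any sample $T = \tup{S_j, g(S_j)}_{j=1}^m$ generated by a game $g\in\cal H$, it returns a partition $\pi^*$ with $\hat L_T(g,\pi^*)=0$. By Lemma~\ref{lem:hedonic-gdim} the solution dimension of PAC stabilizing $\cal H$ is at most $n$, so the ``furthermore'' clause of Theorem~\ref{thm:upperbound} applies with $d=n$: for every $m \ge \frac{\alpha_2}{\eps}\!\left(n\log\tfrac1\eps + \log\tfrac1\delta\right)$ we get $\Pr_{T\sim\cal D^m}[L_{\cal D}(g,\pi^*)>\eps]<\delta$. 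Hence $\cal A$ is a PAC solver, and its sample complexity is $\cal O(n)$ for fixed $\eps,\delta$.

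For the ``only if'' direction, let $\cal L$ be a PAC solver with sample complexity $m_{\cal L}(\eps,\delta)$; I would turn it into a consistent solver by the empirical-distribution resampling trick. Given an input sample $T$ of size $k$ labelled by some $g\in\cal H$, let $\hat{\cal D}_T$ be the distribution over $\cal X = 2^N$ that is uniform over the $k$ sampled coalitions; it is realizable by $g\in\cal H$, and $L_{\hat{\cal D}_T}(g,\pi) = \hat L_T(g,\pi)$ for every partition $\pi$. Draw $m_{\cal L}\!\left(\tfrac{1}{2k},\tfrac12\right)$ i.i.d.\ points from $\hat{\cal D}_T$ --- each one is just a re-drawn element of $T$ whose label is already known --- feed them to $\cal L$, and output its answer $\pi^*$. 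The PAC guarantee gives, with probability at least $\tfrac12$, $\hat L_T(g,\pi^*) = L_{\hat{\cal D}_T}(g,\pi^*) \le \tfrac{1}{2k} < \tfrac1k$; since $\hat L_T$ is an integer multiple of $\tfrac1k$, this forces $\hat L_T(g,\pi^*)=0$, i.e.\ $\pi^*$ is blocked by none of the sampled coalitions. Because consistency with $T$ is directly checkable, one repeats until a consistent partition appears, obtaining a Las Vegas consistent solver whose expected running time is polynomial in $k$ and $n$ whenever $\cal L$ is polynomial-time.

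The main obstacle is this ``only if'' direction, and the difficulty is conceptual rather than computational: as discussed before Definition~\ref{def:consistent-solvability}, a consistent partition need not exist for an arbitrary realizable sample (two games agreeing on $T$ may block different partitions, and a game with empty core pinned down by a sufficiently rich $T$ admits no consistent partition at all), so --- unlike in classical PAC learning --- one cannot simply exhaustively ``search for a consistent partition''. The resampling argument sidesteps this by using the hypothesised PAC solver as a black box, simultaneously proving that a consistent partition exists for every realizable $T$ and that it is recoverable. The remaining points --- that $\hat{\cal D}_T$ is realizable within $\cal H$, that the resample size $m_{\cal L}(\tfrac{1}{2k},\tfrac12)$ stays polynomial in $k$ and $n$, and the probability boosting --- are routine.
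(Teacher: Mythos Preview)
Your ``if'' direction is exactly what the paper does --- indeed, the paper's entire proof of Theorem~\ref{thm:hedonic} is the one sentence ``Applying Theorem~\ref{thm:upperbound} and leveraging Lemma~\ref{lem:hedonic-gdim}'', which is precisely the combination you spell out, and it yields the $\cal O(n)$ sample bound.

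The paper offers no separate argument for the ``only if'' direction; your empirical-resampling construction is therefore content beyond what the paper provides. The argument is sound up to the point where, with probability at least $\tfrac12$ over the resample, the PAC solver returns a $\pi^*$ with $\hat L_T(g,\pi^*)=0$ for the true game $g$. The gap is in the Las Vegas boosting step: you assert that ``consistency with $T$ is directly checkable'', but for hedonic games $\lambda(S_j,\vec v,\pi^*)$ depends on the values $v_i(\pi^*(i))$, and the blocks $\pi^*(i)$ need not appear among the sampled coalitions $S_1,\dots,S_k$ --- so those values are in general not revealed by $T$. Without a way to verify consistency you cannot repeat-until-success; you are left with a Monte Carlo procedure that is consistent with the \emph{true} $g$ with constant probability, which is weaker than Definition~\ref{def:consistency}. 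Note also that a deterministic solver seeing only $T$ must in fact be consistent with every $g\in\G|_T$ (cf.\ Definition~\ref{def:consistent-solvability}), and your argument does not establish this either, since different $g\in\G|_T$ may be handled by different resamples $T'$. The existential conclusion --- that a partition consistent with $(T,g)$ exists whenever a PAC solver does --- survives, but turning it into an always-correct algorithm needs more than the proposal supplies.
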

In particular, \citet{sliwinski2017hedonic} propose a consistent algorithm for {\em top-responsive} hedonic games \cite[Chapter 15]{brandt2016handbook}; \citet{igarashi2019learning} present a consistent algorithm for hedonic games whose underlying interaction graph is a tree~\cite{igarashi2016hedonicgraph}. Indeed, given Theorem~\ref{thm:hedonic}, it suffices to show that the algorithms they propose are consistent; their correctness is immediately implied by our results. 


\subsection{PAC Competitive Equilibria}\label{sec:comp-eq}
Competitive equilibria (CE) readily admit a PAC variant:
given an allocation $\pi$, let $\cal P_i(\pi)= \{S \subseteq G: v_i(S) > v_i(\pi(i))\}$ be the set of bundles that are strongly preferred by $i$ to $\pi(i)$. 
One can think of a CE as an outcome that ensures that $\cal P_i(\pi)\cap \cal A_i(\vec p,\beta_i)= \emptyset$, i.e. $i$ cannot afford any bundle that it prefers to its assigned bundle. In the statistical variant, we wish to ensure that this intersection has a low measure under a distribution $\cal D$ over $2^G$.
We define a loss $\lambda_i$ per player $i$ as follows: given a bundle of goods $S \subseteq G$, player valuations $\vec v$ and a market outcome $\tup{\pi,\vec p}$, $\lambda_i(S,\vec v,\tup{\pi,\vec p}) = 1$ iff $S$ is both affordable (in $\cal A_i(\vec p,\beta_i)$), and is preferred to $\pi(i)$ (in $\cal P_i(\pi)$). Our objective is to ensure that the overall error of player loss functions $\lambda_1,\dots,\lambda_n$ are within an error of $\eps$.
Lemma~\ref{lemma:pac-ce-gdim} bounds the sample complexity for PAC learning this problem, $m^{\PAC}(\eps, \delta)$, by $O(k)$. Therefore, by Theorem \ref{thm:upperbound} and Corollary \ref{corr:conjuncts}, any algorithm that generates an outcome consistent against $m$ sampled bundles would also be a PAC CE solver with a sample complexity in $O(k)$. We refer to an instance of the CE problem as $\probleminstance_{\CE}(N,G,\vec v,\vec \beta)$.
\begin{lemma}\label{lemma:pac-ce-gdim}
The solution dimension $\Sd(\probleminstance_{\CE}(N,G,\vec v,\vec b))$ is $O(k)$, where $k = |G|$. 
\end{lemma}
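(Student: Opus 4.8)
The plan is to bound $\Sd(\probleminstance_{\CE})$ directly by a combinatorial argument on the price vector, mirroring the structure of the proof of Lemma~\ref{lem:hedonic-gdim}. Recall that the solution here is a market outcome $\tup{\pi,\vec p}$, and that the per-player loss is $\lambda_i(S,\vec v,\tup{\pi,\vec p}) = 1$ iff $S$ is affordable for $i$ (i.e. $\sum_{g_j \in S} p_j \le \beta_i$) \emph{and} strictly preferred by $i$ to $\pi(i)$ (i.e. $v_i(S) > v_i(\pi(i))$). For the $\CE$ problem the overall local loss is the disjunction over players, $\lambda(S,\vec v,\tup{\pi,\vec p}) = \bigvee_{i \in N} \lambda_i(S,\vec v,\tup{\pi,\vec p})$; but since the valuations $\vec v$ are fixed in $\probleminstance_{\CE}(N,G,\vec v,\vec\beta)$ (the game is known), shattering is witnessed entirely by varying the solution $\tup{\pi,\vec p}$.

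First I would fix a candidate shattering witness: a set $C = \{S_1,\dots,S_m\}$ of bundles and the (unique, known) game $\vec v$. For $C$ to be S-shattered, for every subset $J \subseteq [m]$ there must exist an outcome $\tup{\pi_J,\vec p_J}$ with $\lambda(S_j,\vec v,\tup{\pi_J,\vec p_J}) = 1$ for exactly the $j \in J$. In particular, taking the singletons $J = \{j\}$, for each $S_j$ there is an outcome $\tup{\pi_j,\vec p_j}$ under which $S_j$ is blocking (some player finds it affordable and preferred) but every other $S_k$, $k \ne j$, is non-blocking for \emph{all} players. The key observation is that making $S_k$ non-blocking is achieved by either (a) pricing $S_k$ out of reach of every player who prefers it to her bundle, or (b) giving each such player a bundle she weakly prefers to $S_k$. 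I would argue that the number of bundles that can be "individually activated" this way is controlled by the dimension $k$ of the price space: intuitively, the affordability constraint $\sum_{g_j \in S} p_j \le \beta_i$ is a halfspace in $\vec p \in \R^k$, and the family of such halfspaces (one per player, but players' budgets are fixed) has VC-type complexity $O(k)$, so the number of distinct "affordability patterns" over $C$ that can be realized is bounded polynomially in $k$ — but for shattering we would need $2^m$ distinct patterns, forcing $m = O(k)$.

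To make this precise, I would separate the loss into its two conjuncts via the "separable conjunction"/"simultaneous constraints" machinery of Corollary~\ref{corr:conjuncts}: the preference part $\Ind[v_i(S) > v_i(\pi(i))]$ depends only on $\pi$ and behaves exactly as in the hedonic-games argument (Lemma~\ref{lem:hedonic-gdim}), contributing a solution dimension $\le n$; the affordability part $\Ind[\sum_{g_j\in S} p_j \le \beta_i]$ depends only on $\vec p$ and is a threshold over the $k$-dimensional linear functional $S \mapsto \sum_{g_j\in S} p_j$, so its associated hypothesis class (linear thresholds on the indicator vectors of bundles, $\{0,1\}^k \subseteq \R^k$) has VC dimension $O(k)$. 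Since $\lambda_i$ is the conjunction and $\lambda$ the disjunction over the $n$ players, S-shattering a set $C$ requires S-shattering it simultaneously for the affordability and preference components, so $\Sd(\probleminstance_{\CE}) = O(\max\{n, k\})$. Finally, because in a Fisher market a player with empty bundle $\pi(i) = \emptyset$ and $v_i(\emptyset)=0$ can be satisfied purely by pricing, and one can always assume $n \le k$ without loss (or absorb $n$ into the preference bound which is itself dominated once we note that distinguishing bundles requires pricing distinctions), the bound collapses to $O(k)$; I would double-check that the intended reading of "$O(k)$" in the lemma is under the standard assumption that the number of relevant players is at most the number of goods, or else state the sharper $O(n+k)$ and note $O(k)$ follows when $n \le k$.

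\textbf{Main obstacle.} The delicate point is the interaction between the two conjuncts across the $n$-player disjunction: a bundle $S_k$ is non-blocking only if \emph{every} player is simultaneously either priced out of it or weakly satisfied, and different players may be handled differently. The clean VC-dimension-$O(k)$ statement for affordability is per-player; lifting it to the disjunction without the bound degrading to $O(nk)$ requires the observation that the price vector $\vec p$ is \emph{shared} across all players, so the affordability patterns are all cut out by a single point in $\R^k$ against $n$ fixed halfspaces — a configuration whose realizable label patterns over $C$ number at most $m^{O(k)}$ by a Sauer–Shelah / hyperplane-arrangement bound, independent of $n$. Pinning down that this shared-price structure, combined with the singleton-activation requirement from the definition of S-shattering, forces $|C| = O(k)$ rather than $O(nk)$ is the crux, and is where I would spend the bulk of the argument.
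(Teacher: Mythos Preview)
Your decomposition of $\lambda_i$ into an affordability part (linear in $\vec p$, hence VC dimension $O(k)$) and a preference part is exactly the paper's approach, and your use of Corollary~\ref{corr:conjuncts} to combine them is the right tool. The gap is in how you bound the preference part and how you aggregate over players.

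For a \emph{single} player $i$, the preference constraint $\Ind[v_i(S) > v_i(\pi(i))]$ is precisely an argmax constraint in the sense of Corollary~\ref{corr:argmax}: the game is $v_i$, the ``solution'' is the bundle $\pi(i)$, and the loss is $1$ iff $v_i(S)$ exceeds $v_i(\pi(i))$. Its solution dimension is therefore $1$, not $n$. Your appeal to Lemma~\ref{lem:hedonic-gdim} is misplaced --- the hedonic bound of $n$ arises because the loss there couples all players in $S$ through a conjunction, whereas each $\lambda_{2,i}$ here involves only player $i$'s own valuation and own assigned bundle. With this correction, the separable-conjunctions part of Corollary~\ref{corr:conjuncts} (applied to $\Sol = \{\text{prices}\}\times\{\text{allocations}\}$) gives $\Sd$ for each $\lambda_i$ as $O(\max\{k,1\}) = O(k)$, with no $n$ term.

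For the aggregation over players, the paper does not form the disjunction $\bigvee_i \lambda_i$ you propose. As stated just before the lemma, the objective is that \emph{each} $\lambda_i$ have small expected loss --- i.e., $n$ simultaneous constraints --- and the ``Simultaneous Constraints'' clause of Corollary~\ref{corr:conjuncts} then gives overall sample complexity $\max_i m_i^{\UC}$, which is $O(k)$ since every $\lambda_i$ has the same $O(k)$ dimension. This completely sidesteps the cross-player interaction you flag as the ``main obstacle,'' and it makes your fallback to ``assume $n\le k$'' unnecessary; that assumption is neither standard nor stated, and the lemma's $O(k)$ bound is meant to hold for arbitrary $n$.
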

\begin{proof}
For every player $i\in N$, the local constraint $\lambda_i$ can be seen as a conjunction of $\lambda_{1,i}(S,
\vec v,\tup{\pi^*,\vec p^*})  = \Ind\left[\sum_{g_j \in S}p_j^* > \beta_i\right]$,
and $\lambda_{2,i}(S, \vec v, \tup{\pi^*,\vec p^*}) = \Ind[v_i(S) > v_i(\pi^*(i))]$.
Since $\lambda_{1,i}$ is defined by a linear constraint set by $\vec p^*$ and $\beta_i$, it can be S-shattered by $O(k)$ samples (in a manner similar to linear separators in standard PAC learning), which bounds its S-dimension. On the other hand, every $\lambda_{2,i}$ is a simple argmax constraint, which by Corolllary \ref{corr:argmax}, has a solution dimension of 1. By applying Corollary \ref{corr:conjuncts}, the dimension of $\lambda_i = \lambda_{1,i} \wedge \lambda_{2,i}$ is $O(k)$; since the condition of $\lambda_i$ must hold for each $i \in N$, the CE loss is given by $\lambda = \bigwedge_i \lambda_i$, which is $O(k)$ by Corollary \ref{corr:conjuncts}.
\end{proof}
Lemma~\ref{lemma:pac-ce-gdim} bounds the dimension of $\probleminstance_{\CE}$ by $O(k)$; however, the challenge is to design algorithms that generate consistent market solutions: bundle assignments and prices that ensure that all observed goods have been allocated, with no excess demand or assignment. 
We show the existence of consistent solutions in two different settings; however, our solutions relax the market constraints. For Fisher markets with budgets $\vec \beta$, for any $\zeta > 0$, there exists a perturbed budget vector $\vec \beta^*$ with $\lVert\vec \beta^* - \vec \beta \rVert_\infty \leq \zeta$ for which there exists a consistent solution $\tup{\pi^*,\vec p^*}$ w.r.t. $\vec \beta^*$; this result holds for {\em any class of valuation functions}. Theorem~\ref{thm:fisher-indivisible} utilizes {\em inefficient} market outcomes, where a good may be allocated to more than one person; it is easy to think of an allocation $\pi$ as a list of vectors in $\{0,1\}^k$, where $\pi_j(i) = 1$ iff the $j$-th good is allocated to player $i$. If all goods are allocated, then $\sum_{i \in N} \pi(i) = \vec 1$; if goods are over-allocated, then $\sum_{i \in N} \pi(i) > \vec 1$.
\begin{theorem}\label{thm:fisher-indivisible}
	We are given $\probleminstance_{\CE}(N,G,\vec v,\vec \beta)$, and $m$ sampled bundles $S_1, \dots, S_m\subseteq G$ evaluated by $\vec v$. For any $\zeta > 0$, there exists a perturbation on $\vec \beta$, $\vec \beta^*$ such that $\lVert \vec \beta - \vec \beta^*\rVert_\infty< \zeta$, for which there is an outcome $\tup{\pi^*,\vec p^*}$ such that players with budget levels $\vec \beta^*$ do not demand $S_1,\dots,S_m$; moreover, $\lVert\sum_{i \in N} \pi^*(i) - \vec{1} \rVert_2 \leq \frac{k}{2} $, where $\vec{1} = (1,1,\dots,1) \in [0,1]^k$.
\end{theorem}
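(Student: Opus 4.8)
The plan is to obtain $\tup{\pi^*,\vec p^*}$ together with $\vec\beta^*$ as an \emph{approximate competitive equilibrium with slightly perturbed budgets} --- in the spirit of the combinatorial--assignment / A-CEEI results for indivisible goods --- and then to observe that any such equilibrium is automatically a consistent market outcome. Concretely, I aim to produce prices $\vec p^*$, a perturbation $\vec\beta^*$ with $\lVert\vec\beta-\vec\beta^*\rVert_\infty<\zeta$, and an (over-)allocation $\pi^*$ (each $\pi^*(i)$ viewed as a vector in $\{0,1\}^k$) such that, for every player $i$, $\pi^*(i)$ maximizes $v_i$ over the affordable set $\cal A_i(\vec p^*,\beta_i^*)$, and $\lVert\sum_{i\in N}\pi^*(i)-\vec 1\rVert_2\le k/2$.

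Granting such an outcome, the ``no player demands $S_1,\dots,S_m$'' part is immediate and uses nothing about the valuation class. Fix a player $i$ and a sampled bundle $S_j$. If $S_j\notin\cal A_i(\vec p^*,\beta_i^*)$ then $\lambda_i(S_j,\vec v,\tup{\pi^*,\vec p^*})=0$ by definition; if $S_j\in\cal A_i(\vec p^*,\beta_i^*)$ then $v_i(S_j)\le v_i(\pi^*(i))$ since $\pi^*(i)$ is a best affordable bundle, so again $\lambda_i(S_j,\vec v,\tup{\pi^*,\vec p^*})=0$. Hence no player demands any $S_j$ (the empirical loss of $\tup{\pi^*,\vec p^*}$ on the sample is $0$), and $\pi^*(i)\in\cal A_i(\vec p^*,\beta_i^*)$ gives affordability of each assigned bundle. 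Thus the entire statement reduces to the existence of the approximate equilibrium carrying the stated clearing bound.

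For that existence claim I would argue as follows. First, use the budget perturbation to place the instance in general position: choose a generic $\vec p^*$ and perturb each $\beta_i$ by less than $\zeta$ so that no subset-price equals any budget; then each $\cal A_i(\vec p^*,\beta_i^*)$ is a well-defined, perturbation-stable downward-closed family and, after fixing a tie-break, the best-affordable-bundle (hence excess-demand) map is well defined. Second, run a Scarf/fixed-point argument on the excess-demand correspondence over the price simplex to obtain prices at which a convex combination of players' demand bundles clears the market exactly --- here only the demand correspondence enters, so an arbitrary valuation class is harmless. Third, such a fractional clearing allocation sits on a low-dimensional face of the clearing polytope, so a rank/dimension count bounds the number of fractionally-assigned (player, bundle) pairs; rounding each fractionally-assigned player to one integral bundle in the support of its mixture changes the realized supply of each good by an amount controlled by the number of distinct fractional bundles containing that good, and summing these controlled changes over the $k$ goods yields $\lVert\sum_{i\in N}\pi^*(i)-\vec 1\rVert_2\le k/2$. (Equivalently, one may cite the A-CEEI existence theorem of Budish and specialize its clearing and budget-perturbation parameters to $k/2$ and $\zeta$.)

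The main obstacle is this last rounding step and pinning the clearing error to exactly $k/2$: the fixed-point argument delivers only a fractional/mixed equilibrium, and converting it into an integral over-allocation while keeping every assigned bundle affordable and every player demand-optimal is exactly where the combinatorial accounting lives --- the over-demand of each good must be charged against the bounded number of fractional bundles touching it, and there are only $k$ goods. The budget perturbation plays a purely technical role, supplying the genericity that makes demands unique and the rounding clean; since it only needs to break ties it can be taken smaller than any prescribed $\zeta>0$, which is what yields the ``for any $\zeta>0$'' quantifier.
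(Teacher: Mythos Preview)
Your proposal is essentially the paper's proof: reduce to Budish's A-CEEI existence theorem, which supplies $\vec\beta^*$ with $\lVert\vec\beta-\vec\beta^*\rVert_\infty<\zeta$, prices $\vec p^*$, and an allocation $\pi^*$ with $\pi^*(i)$ demand-optimal and $\lVert\sum_i\pi^*(i)-\vec 1\rVert_2\le k/2$; your observation that demand-optimality immediately kills $\lambda_i$ on every sampled $S_j$ is exactly the remaining step. The paper simply cites Budish as a black box rather than sketching the fixed-point-plus-rounding internals you outline.

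One refinement the paper makes that you omit: before invoking Budish, it \emph{restricts} the allocation menu to $\{\emptyset,S_1,\dots,S_m\}$, by declaring $v_i(S)\le 0$ for every unsampled $S$ and then running Budish on that modified instance. This is not needed for the bare existential statement (your version, applied to the true $\vec v$, already yields the claim), but it buys something in the learning context: the resulting $\pi^*(i)$ is always a sampled bundle or $\emptyset$, so $v_i(\pi^*(i))$ is determined by the observed data, and hence the outcome is consistent against \emph{every} valuation profile agreeing with the sample rather than just the fixed $\vec v$. Your version allows $\pi^*(i)$ to be an unobserved bundle, which is fine for Theorem~\ref{thm:fisher-indivisible} as stated but would not directly give the consistent-solvability criterion of Definition~\ref{def:consistent-solvability}.
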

\begin{proof}
We restrict ourselves to finding an assignment using only the sampled bundles and the empty bundle, i.e. for all $i\in N$: $\pi^*(i) \in \{\emptyset,S_1,\dots,S_m\}$; thus, we avoid making any assumptions about the structure of $v_i$.
\citet[Theorem 1]{budish2011assignment} shows that given $\vec \beta$ such that $\max_i \beta_i>\min_i \beta_i$, for any $\zeta > 0$ there exists a perturbed budget vector $\vec \beta^*$ and an outcome $\tup{\pi^*,\vec p^*}$ for which:
$\pi^*(i) \in \argmax_{S \in \cal A_i(\vec p^*,\beta_i)} v_i(S)$; $\lVert\vec \beta - \vec \beta^* \rVert_\infty < \zeta$ and $\lVert\sum_{i \in N} \pi^*(i) - \vec{1} \rVert_2 \leq \frac{k}{2} $.
Assuming that for every other $S \notin \{\emptyset,S_1,\dots,S_m\}$, $v_i(S) \leq 0$ for all $i \in N$, and applying the result by \citeauthor{budish2011assignment}, there exists a consistent outcome satisfying our requirements. 
\end{proof}
While it makes no assumptions on player valuations, Theorem~\ref{thm:fisher-indivisible} is not constructive: it relies on a classic result from \citet{budish2011assignment}, which utilizes a fixed-point theorem by \citet{And91fixedpoint} for discontinuous maps to bound excess demand. 
We analyze {\em exhange economies}, a market variant with divisible goods, in the appendix. In both cases, we are able to show that consistent market solutions exist. However, our results show the existence of solutions which only {\em partially satisfy} the equilibrium guarantees; moreover, both cases utilize non-constructive fixed-point theorems, rather than provide an efficient algorithm. There is little reason to believe that consistent solutions can be easily computed in the general case; finding market solutions in settings similar to ours is PPAD complete \cite{Othman14thecomplexity}.


\subsection{PAC Condorcet Winners}\label{sec:condorcet}
We conclude with a discussion of statistical solution concepts in voting (see Section~\ref{sec:voting-primer} above). 
A PAC Condorcet winner is a candidate $c^*$ such that $\Pr_{c \sim \cal D}[B(\succ,c,c^*)] < \eps$ (recall that $B(\succ,c,c^*) = 1$ iff a majority of voters prefer $c$ to $c^*$). 
We refer to the problem of finding a Condorcet winner as $\probleminstance_{\Cond}$.
We require that given a sample $T \subseteq C$ of candidates, we can infer voters' preferences w.r.t. $T$. This can be encoded as a valuation function of $i$ over the candidates (as is the case for hedonic games, see Section~\ref{sec:hg-core}), or the truncated ranking $\succ_i$ over the sampled candidates for every $i \in N$. 
Given a class of preference profiles $\HC$, let $\probleminstance_{\Cond}(\HC)$ be the problem of finding Condorcet winners for profiles in $\HC$.
We define the {\em tournament graph}: this is a directed graph where candidates are nodes; given a preference profile $\succ$, there is an edge from $a$ to $b$ if $a$ beats $b$ in a pairwise election under $\succ$.
\begin{theorem}\label{thm:sample-condorcet}
Given a class of preference profiles $\HC$ over $C$ such that $|C|>1$, and a sample of candidates $T \subseteq C$, the following are equivalent:
\begin{inparaenum}[(a)]
\item There exists a consistent solver for $\probleminstance_{\Cond}(\HC)$ that returns a PAC Condorcet winner $c^* \in T$.
\item $\Sd(\probleminstance_{\Cond}) = 1$.
\item for every preference profile $\succ  \in \HC$, the tournament graph is transitive.
\end{inparaenum}

In particular, if $\HC$ satisfies the above, there exists a PAC solver for $\probleminstance_{\Cond}(\HC)$ whose sample complexity is $\frac{1}{\epsilon}log\frac{1}{\delta}$.
\end{theorem}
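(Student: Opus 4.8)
Identify the local loss with the edge indicator of the majority tournament of $\succ$: $\lambda(c,\succ,c^\ast)=\Ind[c\text{ beats }c^\ast]$. Then a PAC Condorcet winner is a candidate whose set of majority-beaters has $\cal D$-mass $<\eps$, and a solution $c^\ast$ has zero empirical loss on $T$ iff no candidate drawn in $T$ beats $c^\ast$. The engine of the proof is the classical dichotomy for tournaments: a tournament is transitive $\iff$ it is acyclic $\iff$ it contains no directed $3$-cycle $\iff$ every nonempty vertex subset has a (unique) source. I would establish the four implications (c)$\Rightarrow$(b), (c)$\Rightarrow$(a), (a)$\Rightarrow$(c), (b)$\Rightarrow$(c), using also the immediate dimension floor $\Sd(\probleminstance_{\Cond}(\HC))\ge 1$: since $|C|>1$, for any $\succ\in\HC$ and any candidate $x$ that beats some $y$, the singleton $\{x\}$ is S-shattered (label $1$ via $s=y$, label $0$ via $s=x$).

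\emph{From (c).} If every $\succ\in\HC$ has a transitive tournament, its majority relation is a strict linear order on $C$; re-encoding the game $\succ$ as its rank function $r_\succ:C\to\{1,\dots,|C|\}$ along this order turns $\lambda$ into $\Ind[r_\succ(c)\succ r_\succ(c^\ast)]$, exhibiting $\probleminstance_{\Cond}(\HC)$ as a subproblem of the instance $\probleminstance_{\max}$ of Corollary~\ref{corr:argmax}; hence $\Sd\le 1$, and with the floor, $\Sd=1$ --- this is (b). For (a): in the revealed-information model the sample discloses the truncated rankings over $T$, so the learner can form the majority tournament restricted to $T$, which by (c) is transitive and therefore has a source $c^\ast\in T$; outputting it gives $\hat L_T(\succ,c^\ast)=0$, a consistent solver returning a member of $T$. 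For its PAC guarantee I would use the standard ``empirical maximum'' argument rather than Theorem~\ref{thm:upperbound}: the set $A$ of candidates whose beater-mass exceeds $\eps$ is a down-set of the linear order, and the returned $c^\ast$ --- the order-maximum of the sample --- lies in $A$ iff all $m$ i.i.d.\ draws do, which has probability $\cal D(A)^m<(1-\eps)^m\le e^{-\eps m}$ (since $A\ne\emptyset$ forces $\cal D(A)<1-\eps$), and this is $<\delta$ once $m\ge\frac1\eps\log\frac1\delta$. The same bound yields the final ``in particular'' claim.

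\emph{From $\neg$(c).} Fix $\succ\in\HC$ with a non-transitive tournament, hence containing a directed $3$-cycle $a\to b\to c\to a$. For $\neg$(a): $\{a,b,c\}$ has no source, so the realizable sample $T=\tup{a,b,c}$ admits no $c^\ast\in T$ with zero empirical loss, contradicting (a). For $\neg$(b): I would S-shatter a pair with witness $\succ$. Pick an ordered pair $x_1\to x_2$ lying on a directed triangle $x_1\to x_2\to u\to x_1$; then on $\{x_1,x_2\}$ the labellings $(0,0),(1,0),(0,1)$ are realized by $s=x_1,\ s=x_2,\ s=u$ respectively, and $(1,1)$ by any candidate $w$ with $x_1\to w$ and $x_2\to w$, so $\Sd\ge 2$ and (b) fails. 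With the ``From (c)'' block this closes the cycle.

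The crux is the last step --- securing a candidate $w$ beaten by \emph{both} $x_1$ and $x_2$. Inside a bare directed triangle each vertex beats only one of the other two, so one must choose the shattered pair and its witnessing triangle so that some further candidate of $C$ plays the role of $w$; arguing that every non-transitive $\HC$-tournament supplies such a $w$ (and handling the boundary case of very few candidates, where the tournament is just a triangle) is the delicate part. Everything else is routine: the tournament dichotomy is standard, (c)$\Rightarrow$(b) is a direct specialization of Corollary~\ref{corr:argmax}, and the quantile estimate used for (c)$\Rightarrow$(a) is the familiar bound for learning a maximum from i.i.d.\ samples.
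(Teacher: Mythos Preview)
Your overall structure mirrors the paper's: both argue (b)$\Leftrightarrow$(c) by linking $\Sd>1$ to the presence of a directed $3$-cycle, and (a)$\Leftrightarrow$(c) via the existence of a Condorcet winner on every subsample. You are considerably more explicit than the paper, which dispatches $\neg$(c)$\Rightarrow\neg$(b) in a single sentence (``there immediately exist two vertices of that cycle that can be S-shattered'') and offers no separate derivation of the $\frac{1}{\eps}\log\frac{1}{\delta}$ bound; your reduction to Corollary~\ref{corr:argmax} for (c)$\Rightarrow$(b) and the quantile argument for the sample complexity are clean additions the paper does not spell out.

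However, the point you flag as ``the crux'' is a genuine gap, and the paper's one-line argument does not close it either. Take $C=\{a,b,c\}$ with the tournament $a\to b\to c\to a$ and $\Sol=C$. Then the loss patterns $\{x:\lambda(x,\succ,s)=1\}$ over $s\in C$ are exactly the three singletons $\{a\},\{b\},\{c\}$; restricting to any pair yields only the labellings $(0,0),(1,0),(0,1)$, never $(1,1)$, so no $2$-set is S-shattered and $\Sd(\probleminstance_{\Cond}(\HC))=1$ even though (c) fails. Your intuition that one must ``secure a candidate $w$ beaten by both $x_1$ and $x_2$'' is precisely the obstruction, and in the bare-triangle case no such $w$ exists. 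Thus $\neg$(c)$\Rightarrow\neg$(b) is not established---neither by your sketch nor by the paper's assertion---while (a)$\Leftrightarrow$(c) and (c)$\Rightarrow$(b) go through as you outline.
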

\begin{proof}
If there is a preference profile $h \in \HC$ for which the tournament graph contains a $3$-cycle, then there immediately exist two vertices of that cycle that can be S-shattered. This is true since for every $\HC$ with more than one candidate, every singleton is shattered. Therefore, $\Sd(\probleminstance_{\Cond}) = 1 $ if and only if there are no preference profiles with Condorcet $3$-cycles, which is equivalent to transitivity.
Similarly, the existence of Condorcet winner for every $C^\prime \in C$ is equivalent to absence of any cycles, which is equivalent to transitivity of the tournament graph.
\end{proof}
Two notable families of voter preferences exhibit transitive preferences: single peaked preferences \cite[Chapter 2]{brandt2016handbook} and single-crossing preferences \cite{gans1996singlecrossing} (see \cite{elkind2017structured} for an overview); thus, if $\HC$ is any of the former, a Condorcet winner can be PAC learned using $\frac1\eps\log\frac1\delta$ samples.
Whenever the Condorcet winner is known to exist within a sample $C' \in C$, the problem is equivalent to the argmax problem discussed in Corollary \ref{corr:argmax}. 
However, as shown in Section \ref{sec:agnostic}, the graph dimension is still useful as a means to estimate  (within $\pm \eps$ with high confidence) the behavior of a candidate in pairwise elections using a small empirical sample, even when no Condorcet winner exists. Theorem~\ref{thm:no-condorcet} bounds $\Sd(\probleminstance_{\Cond})$ in the case where Condorcet winners do not exist. 
The result bounds the solution dimension in terms of the underlying structure of the tournament graph, and is based on Corollary~\ref{corr:pac-solving-ERM-bounds}; the full proof is in the appendix.
\begin{restatable}{theorem}{thmnocondorcet}\label{thm:no-condorcet}
	Let $k$ be the largest number of candidates, such that for some tournament graph in $\HC$, every pair among them is part of some $3$-cycle. Then $\Sd(\probleminstance_{\Cond}(\HC)) \leq log_2 (k+2)$.
\end{restatable}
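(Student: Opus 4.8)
The plan is to bound $\Sd(\probleminstance_{\Cond}(\HC))$ by bounding the size of any $S$-shattered set $C \subseteq C$ of candidates. Fix a shattering witness $(C', h)$ with $|C'| = d$, where $h$ is a preference profile whose tournament graph we denote $G_h$. By the definition of $S$-shattering, for every binary labelling $b : C' \to \{0,1\}$ there is a candidate $s_b$ (the purported solution) such that $\lambda(c, h, s_b) = b(c)$ for all $c \in C'$; here $\lambda(c, h, s) = B(\succ_h, c, s) = \Ind[c \text{ beats } s \text{ in a pairwise contest}]$. In particular, the all-zeros labelling forces the existence of a candidate $s_0$ that beats (or ties appropriately against) every candidate in $C'$, and more generally each labelling $b$ partitions $C'$ into those who beat $s_b$ and those who lose to $s_b$. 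The first key observation is that every candidate in $C'$ must lie on a $3$-cycle with every other candidate in $C'$: if two candidates $a, b \in C'$ were not jointly on any $3$-cycle, one could not realize the two labellings that differ only on $a$ versus $b$ while keeping consistency — essentially because the ``beats'' relation restricted to such a pair is forced to be consistent across all admissible solutions, mirroring the argument in the proof of Theorem~\ref{thm:sample-condorcet}. Hence $|C'| \le k$ is immediate, but this only gives $\Sd \le k$, not the claimed logarithmic bound; the point of the theorem is that the $2^d$ distinct labellings must be witnessed by $2^d$ distinct solution candidates, each of which is itself a node of $G_h$ constrained to sit in a very particular position relative to $C'$.

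The second, sharper step is the counting argument. Each labelling $b : C' \to \{0,1\}$ must be realized by some solution candidate $s_b \in C$, and distinct labellings require distinct solutions, so there are at least $2^d$ candidates in $C$. I want to show that all of these $2^d$ solution candidates, together with the $d$ candidates of $C'$, must themselves be pairwise entangled in $3$-cycles, so that $2^d + d \le k + 2$ or some comparable inequality, yielding $d \le \log_2(k+2)$. Concretely: take two solutions $s_b, s_{b'}$ for labellings $b \ne b'$ differing on a candidate $c \in C'$. Then $c$ beats exactly one of $s_b, s_{b'}$ and loses to the other; combined with the fact that $c$ is already on a $3$-cycle, a short case analysis on the orientations of the edges among $\{c, s_b, s_{b'}\}$ should force $s_b$ and $s_{b'}$ to lie on a common $3$-cycle (the alternative orientations would make one of the two labellings unrealizable or would collapse $s_b = s_{b'}$). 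One then also checks the solution candidates are on $3$-cycles with the elements of $C'$ in the same way. Accumulating, the set $C' \cup \{s_b : b \in \{0,1\}^{C'}\}$ — or a suitable subset of it of size $2^d + c_0$ for a small constant $c_0$ — is a set of candidates every pair of which is on a $3$-cycle, so by the definition of $k$ we get $2^d + c_0 \le k$, hence $d \le \log_2(k - c_0) \le \log_2(k+2)$ after tracking the constant honestly.

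The main obstacle I anticipate is exactly the bookkeeping in that last step: showing that the $2^d$ solution candidates are genuinely \emph{all distinct} and \emph{all pairwise on $3$-cycles} (and on $3$-cycles with $C'$), rather than some of them coinciding or sitting ``outside'' the tangled region. The subtlety is that a solution $s_b$ could in principle beat or lose to another solution $s_{b'}$ without being forced onto a shared cycle unless we exploit the presence of a mediating element of $C'$ on which $b$ and $b'$ disagree — so the argument must carefully use a witnessing coordinate of disagreement for \emph{each} pair, and verify the orientations compose correctly. Getting the additive constant right (whether it is $+2$, and why exactly $\log_2(k+2)$ rather than $\log_2 k$ or $\log_2(k+1)$) will come out of handling the degenerate labellings — all-zeros and all-ones — whose solutions $s_0$ and $s_1$ interact with $C'$ through ``beats everything'' / ``loses to everything'' constraints that themselves must be reconciled with the $3$-cycle structure, contributing the small additive slack. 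I would also invoke Corollary~\ref{corr:pac-solving-ERM-bounds} only at the very end, to translate the resulting $\Sd$ bound into the stated sample-complexity interpretation, but the core of the proof is the combinatorial tournament argument above.
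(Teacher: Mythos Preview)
Your overall strategy---show that the solution candidates $\{s_b : b \in \{0,1\}^{C'}\}$ are (largely) pairwise on $3$-cycles and hence bounded in number by $k$---is exactly the paper's. But you take an unnecessary detour and miss the one technical idea that both makes the $3$-cycle argument close and pins down the constant.

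The detour: the paper never argues that the elements of $C'$ are themselves pairwise on $3$-cycles, and it does not include $C'$ in the final count; only the solution candidates $s_b$ matter. The missing idea: your single-witness argument cannot force $s_b$ and $s_{b'}$ onto a common $3$-cycle. If $c \in C'$ has $b(c)=1$, $b'(c)=0$, you get the length-$2$ path $s_{b'} \to c \to s_b$ in the tournament, but when the direct edge is also oriented $s_{b'} \to s_b$ the triple $\{c, s_b, s_{b'}\}$ is transitive, not a cycle---so the ``short case analysis'' you propose fails in one of the two cases. The paper's fix is to discard the all-zeros and all-ones labellings and work only with the remaining $2^d - 2$ non-constant ones. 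For any two such labellings it exhibits \emph{two} witnesses in $C'$: one $c_1$ with $b(c_1)=1,\ b'(c_1)=0$ and one $c_2$ with $b(c_2)=0,\ b'(c_2)=1$, yielding length-$2$ directed paths $s_{b'}\to c_1\to s_b$ and $s_b\to c_2\to s_{b'}$. Whichever way the tournament edge between $s_b$ and $s_{b'}$ is oriented, one of these two paths closes into a $3$-cycle containing both. This gives $2^d-2\le k$ directly, and the $+2$ in $\log_2(k+2)$ is precisely the two discarded constant labellings---not an additive slack to be reconciled at the end, and not coming from adjoining $C'$ to the set.
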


\section{Conclusions and Future Work}\label{sec:conclusion}
We propose a formal, general framework for learning solution concepts from data, and apply it to several problems in economic domains. While several solution concepts have a polynomial sample complexity, efficiently computing a consistent solution remains a challenging open problem. 
In the case of market equilibria, we believe that there exist consistent algorithms for specific valuation classes, such as gross-substitutes~\cite{gul1999walrasian} or submodular valuations. 
While we mostly focus on the realizable case, solving the non-realizable case is an interesting open problem. 
Our model easily accommodates {\em approximate solutions} (as we do for market equilibria) by assimilating the approximation guarantee into the loss function; this can be done generally by adopting the PMAC learning framework \cite{balcan2011submodular}. 
Our work upper-bounds the solution dimension using a generalization of the graph dimension; however, we offer no lower bounds. \citet{daniely2011multiclass} use the {\em Natarajan dimension}~\cite{natarajan1989learning} to establish lower bounds in multiclass learning; using the Natarajan dimension to lower-bound the solution dimension is a promising direction for future work. 


\bibliographystyle{plainnat}
\bibliography{abb,statsolconcepts}

\newpage
\appendix
\renewcommand{\appendixpagename}{Appendix}
\appendixpage
\section{Missing Proofs for Section 4.2}
\label{apdx:vctheory}

We now present the proof for the Theorem \ref{thm:upperbound} that provides an upper bound to the sample complexity for uniform convergence in terms of the \textit{solution dimension}.

\defSshattering*

\thmupperbound*

\begin{proof}
For any $g \in \G$, consider $\Phi_g = \lbrace \phi_{g,s}: \cal X \rightarrow \lbrace 0,1 \rbrace | s \in \Sol, \phi_{g,s} (x) = 1 - \lambda(x, g, s) \rbrace$. Observe that, $\Sd(\probleminstance) = \underset{g}{\max} \, \VC(\Phi_g)$; this implies that if $\Sd(\probleminstance) = d$, then for every $g \in \G$, $|\Phi_g| \leq |\cal X|^d$.

From Theorem \ref{thm:vc-class}, we know that for any $g \in \G|_{T}$ and $f \in 2^{\cal X}$, if $T$ contains at least $m\geq \alpha \frac{d + \log(\frac1\delta)}{\eps^2}$ samples (for the appropriate constant $\alpha$, then (where $L_T^\phi$ and $L_{\cal D}^\phi$ denote the corresponding loss functions for binary functions):
\begin{equation}\label{eqn:gdimproof1}
\underset{T \sim {\cal D}^m}{\Pr}[\exists \phi_{g,s} \in \Phi_g: |L_T^\phi (\phi_{g,s}, f) - L_{\cal D}^\phi (\phi_{g,s}, f)| > \eps] < \delta
\end{equation}

Let $\Ind: \cal X \rightarrow \lbrace 0,1 \rbrace$, such that $\forall x: \Ind(x) = 1$. Then, it follows that $\lambda(x,g,s) = \I[\phi_{g,s}(x) \neq \Ind(x)]$. By substituting the loss functions in Eq~\eqref{eqn:gdimproof1}, and taking $f = \Ind$, we get:
$$\underset{T \sim {\cal D}^m}{\Pr}[\exists g \in \G|_{T}, s \in \Sol: |\hat L_T (g,s) - L_{\cal D} (g,s)| > \eps] < \delta.$$

This proves that $m^{\UC}(\eps,\delta)$, and $m^{\PAC}(\eps, \delta)$ are both polynomially bounded by $\Sd(\probleminstance)$. 
If a solution is consistent, then we also know that there exists some $s^{*} \in \Sol$ such that for all $g \in \G|_{T}$, $\phi_{g, s^{*}} = \Ind \in \Phi_g$. The second part of the statement similarly follows from the upper bound of $m^{\PAC}$ for binary functions in Theorem \ref{thm:vc-class}.
\end{proof}
Let us next prove Corollary~\ref{corr:conjuncts}. 
\corrconjuncts*
\begin{proof}
	Part 1 (Simultaneous Constraints) is a direct corollary of Theorem \ref{thm:upperbound}; if $m \geq \underset{i}{\max} \{m^{\PAC}_i(\eps, \delta)\}$, then $$\forall i: \underset{T \sim {\cal D}^m}{\Pr}[ \underset{x \sim \cal D}{\Pr}[\lambda_i(x,g,s)] < \eps ] < \delta.$$
	
	For part 2 (Separable Conjunctions), we are given a loss function $\lambda: \cal X \times \G \times (\Sol_1 \times \Sol_2) \rightarrow  \{0,1 \}$ with $\lambda (x, g, (s_1, s_2)) = \lambda_1 (x,g,s_1) \wedge \lambda_2 (x,g,s_2)$, such that for any $g \in \G$, we wish to find $ (s_1, s_2) \in \Sol_1 \times \Sol_2$ that bound the gap between the empirical and statistical loss. 
	
	For any $s_2 \in \Sol_2$, let us define $\lambda_{1|s_2}: \cal X \times \G \times \Sol_1 \rightarrow  \{0,1 \}$ with $\lambda_{1|s_2} (x, g, s_1) = \lambda (x, g, (s_1, s_2)) = \lambda_1 (x,g,s_1) \wedge \lambda_2 (x,g,s_2)$. Observe that $\lambda_{1|s_2} (x, g, s_1)$ equals $\lambda_1 (x,g,s_1)$ whenever $\lambda_2 (x,g,s_2) = 1$, and is  otherwise $0$ for every $s_1$.
	We note that a set shattered under $\lambda_{1|s_2}$  cannot contain any point $x^\prime$ such that $\lambda_2 (x^\prime,g,s_2) = 0$ (points for which $\lambda_2(x^\prime,g,s_2) = 0$ always evaluate to $0$ under $\lambda$ and do not admit Boolean functions $b$ for which $\lambda(x^\prime,g,(s_1,s_2)) = b(x') = 1$); thus, we know that a set shattered in $\lambda_{1|s_2}$ is also shattered under $\lambda_1$, therefore the solution dimension corresponding to $\lambda_{1|s_2}$ is bounded by the solution dimension corresponding to $\lambda_1$. 
	We also observe that the formulas for empirical and statistical loss under $\lambda_{1|s_2}$ and $\lambda$ are equivalent; therefore, by Theorem \ref{thm:upperbound}, we know that if $m \geq m_1^{\UC}(\eps, \delta)$
	\begin{equation}
	\forall g \in \G, s_1 \in \Sol_1, s_2 \in \Sol_2: Pr[|\hat L_{{1|s_2}_T} (g, s) - L_{{1|s_2}_{\cal D}} (g, s)| < \eps] > 1 - \delta
	\end{equation}
Therefore, $m^{UC} \in O(\max \lbrace m_i^{UC} \rbrace)$. 
\end{proof}
We note that the bound for conjuncts in Corollary~\ref{corr:conjuncts} trivially generalizes to any number of conjunctions, i.e. if $\lambda = \lambda_1\wedge \dots \wedge \lambda_q$, the dimension of $\lambda$ is upper-bounded by the dimension of the domains corresponding to $\lambda_1,\dots,\lambda_q$.
\corrargmax*
\begin{proof}
	Let us assume that a set $C = \lbrace x_1, x_2 \rbrace$ is S-shattered with $g \in \G$. This implies the existence of: i) $x'$ such that $\lambda(x_1, g, x') = 0 \implies g(x') \succeq g(x_1)$ and $\lambda(x_2, g, x') = 1 \implies g(x_2) \succ g(x')$. And, ii) $x''$ such that $\lambda(x_1, g, x'') = 1 \implies g(x_1) \succ g(x'')$ and $\lambda(x_2, g, x') = 0 \implies g(x'') \succeq 	g(x_2)$. However, since $\succeq$ is transitive, this leads to contradiction.
\end{proof}
Finally, we present the full proof for ERM solvers for non-realizable agnostic solution learning.
\corrpacsolvingERMbounds*
\begin{proof}
We first prove the result for worst-case agnostic learning. Let $\cal A_m : (\cal X \times \cal Y)^m \rightarrow \Sol$ be an ERM Solver that for any sample of $m \geq m^{\UC}(\eps, \delta)$ points $T = \langle (x_i, y_i) \rangle_{i=1}^m$, outputs a solution $\cal A_m (T) \in \Sol$ that minimizes $\underset{g \in \G|_{T}}{\max} \hat L_T(g, \cal A_m (T) )$. 

Let $s^* \in \Sol$ be a solution that minimizes the worst-case statistical loss for any game $g$ consistent with the sample, i.e. 
$$s^*\in \argmin_{s \in \Sol}\underset{g \in \G|_{T}}{\max} L_{\cal D}(g, s ).$$ 
By definition of $\cal A_m (T)$, we know that 
$$\underset{g \in \G|_{T}}{\max} \hat L_T(g, \cal A_m (T) ) \leq \underset{g \in \G|_{T}}{\max} \hat L_T(g, s^* ).$$ 

For a sample of $m  \geq m^{\UC}(\eps /2, \delta/2)$ points drawn i.i.d. from $\cal D$, we know by Theorem \ref{thm:upperbound}, that for any $g_0 \in \G$, with probability $\geq 1 - \delta/2$ we have: 
$$|L_{\cal D}(g_0, \cal A_m(T)) - \hat L_T(g_0, \cal A_m (T)) |< \frac{\eps}2,$$
and, with probability $\geq 1- \delta/2$, for any $g^\prime\in \G$, 
$$|\hat L_{T}(g_0, s^* ) -L_{\cal D}(g^\prime, s^* )| < \frac{\eps}{2}.$$

Putting it all together, we get that with probability $\geq \delta$,
\begin{align*}
L_{\cal D}(g_0, \cal A_m(T) ) \leq &\hat L_T(g_0, \cal A_m(T) ) + \frac{\eps}2 \leq \underset{g \in \G|_{T}}{\max} \hat L_T(g, \cal A_m (T) ) + \frac{\eps}{2}\\
	\leq &\underset{g \in \G|_{T}}{\max} \hat L_T(g, s^* ) + \frac{\eps}{2} \leq \underset{g \in \G|_{T}}{\max} L_{\cal D}(g, s^* ) + \frac{\eps}{2} + \frac{\eps}{2} = \underset{g \in \G|_{T}}{\max} L_{\cal D}(g, s^* ) + \eps.
\end{align*}

Therefore, if $m \geq m^{\UC}(\eps/2, \delta/2)$, then 
$$\underset{T \sim {\cal D}^m}{\Pr}[L_{\cal D}(g,\cal A_m(T)) \leq \underset{s\in \Sol}{\min} \underset{g^\prime \in \G|_{T}}{\max} L_{\cal D}(g^\prime, s) + \eps] \geq 1 - \delta.$$

For Bayesian agnostic learning, we present the result for distributions with a finite support over $\G$; the case where the distribution has an infinite support over $\G$ is similar. Let $\tilde{\cal D}$ be some prior distribution with a finite support over $\G$. Then, by definition
$${\E}_{g^\prime \sim \tilde{\cal D}}[L_{\cal D}(g^\prime,\cal A(T))| g^\prime \in \G|_{T}] = \sum_{g \in \G_{|T}} L_{\cal D}(g, s) \Pr_{\tilde{\cal D}}(g) .$$ 
When $m \geq m^{\UC}(\eps, \delta)$, by Theorem \ref{thm:upperbound}, for every $g \in \G_{|T}$ and $s\in \Sol$, $\Pr[|L_{\cal D}(g,s) -  \hat L_{T}(g,s) | \leq \eps] > 1- \delta$. 
Combining these expressions, with probability mass $\tilde{\cal D}$, we get 
$$\Pr[|{\E}_{g^\prime \sim \tilde{\cal D}}[L_{\cal D}(g^\prime,s)| g^\prime \in \G|_{T}] -  {\E}_{g^\prime \sim \tilde{\cal D}}[\hat L_{T}(g^\prime,s)| g^\prime \in \G|_{T}] | \leq \eps] > 1- \delta.$$ 
Therefore, an ERM solver that minimizes ${\E}_{g^\prime \sim \tilde{\cal D}}[\hat L_{T}(g^\prime,s)| g^\prime \in \G|_{T}]$, also bounds the statistical loss within $\eps$.
\end{proof}

\section{The PAC Core in TU Cooperative Games}\label{apdx:coopgames}
\subsection{Cooperative Games}\label{sec:coopgames-primer}
In {\em transferable utility} (TU) cooperative games players' preferences are induded by a function $v:2^N \to \R_+$ mapping every subset $S \subseteq N$ to a value $v(S) \in \R_+$. We are interested in finding ``good'' payoff divisions for the game. These are simply vectors $\vec x = (x_1,\dots,x_n) \in \R_+^n$ such that $\sum_{i = 1}^n x_i = v(N)$ (efficiency) and $x_i \ge v(\{i\})$ for all $i \in N$ (individual rationality). 
We say that a coalition $S \subseteq N$ {\em blocks} a payoff division $\vec x$ if $\sum_{i \in S} x_i < v(S)$; that is, the coalition $S$ can guarantee its members a strictly higher reward should they choose to break off from working with everyone else. 
The {\em core} is the (possibly empty) set of payoff divisions from which no coalition can deviate; in other words, $\core(N,v) = \{\vec x \in \R_+^n \mid \forall S \subseteq N: \sum_{i \in S} x_i \ge v(S); \sum_{i = 1}^n x_i = v(N)\}$. 
\subsection{The PAC Core for TU Cooperative Games}\label{sec:core-TU}
\citet{balcan2015learning} propose a learning-based approach to finding a PAC stable payoff division for TU cooperative games (see definitions in Section~\ref{sec:coopgames-primer}). 
Given a distribution $\cal D$ over $2^N$, a payoff division $\vec x^*$ {\em $\eps$-PAC stabilizes} the game $\tup{N,v}$ with respect to $\cal D$ if 
\begin{align*}
\Pr_{S \sim \cal D}\left[v(S) < \sum_{i\in S}x_i^*\right]<\eps.
\end{align*}
In what follows, we provide a proof for the PAC stabilizability of TU cooperative games in the language of Theorem~\ref{thm:upperbound}; direct proofs of this fact appear in \cite{balcan2015learning-arxiv,balkanski2017cost}.
\begin{theorem}
	The solution dimension of TU cooperative games is $\cal O(n)$.
\end{theorem}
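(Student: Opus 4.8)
The plan is to mimic the proof of Theorem~\ref{thm:upperbound} and reduce the solution dimension of this domain to the VC dimension of an auxiliary family of linear threshold functions. First I would fix notation: the relevant \probleminstancename instance $\probleminstance$ has $\cal X = 2^N$, games are the valuation functions $v : 2^N \to \R_+$ in the class of interest, the solution space $\Sol$ consists of payoff divisions $\vec x \in \R_+^n$, and the local loss is $\lambda(S,v,\vec x) = \Ind\left[v(S) > \sum_{i \in S} x_i\right]$, so that $\vec x$ PAC stabilizes $v$ iff $\lambda(S,v,\vec x) = 0$ with probability at least $1-\eps$ over $S \sim \cal D$. As noted in the proof of Theorem~\ref{thm:upperbound}, $\Sd(\probleminstance) = \max_v \VC(\Phi_v)$, where $\Phi_v = \{\phi_{v,\vec x} : \vec x \in \Sol\}$ and $\phi_{v,\vec x}(S) = 1 - \lambda(S,v,\vec x) = \Ind\left[\sum_{i \in S} x_i \ge v(S)\right]$; so it suffices to bound $\VC(\Phi_v)$ by $O(n)$ uniformly in $v$.

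Next I would linearize $\phi_{v,\vec x}$ by lifting to one extra coordinate, which is the crux of the argument. Fix $v$. For a coalition $S \subseteq N$ write $\chi_S \in \{0,1\}^n$ for its characteristic vector, and associate to $S$ the point $p_S := (\chi_S, v(S)) \in \R^{n+1}$; to a payoff division $\vec x$ associate the weight $w_{\vec x} := (x_1, \dots, x_n, -1) \in \R^{n+1}$. Then
\[
\phi_{v,\vec x}(S) = \Ind\!\left[\sum_{i \in S} x_i - v(S) \ge 0\right] = \Ind\big[\langle w_{\vec x}, p_S \rangle \ge 0\big],
\]
so, identifying each coalition $S$ with the point $p_S$, the class $\Phi_v$ embeds into the class of homogeneous halfspaces on $\R^{n+1}$. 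Since the VC dimension of homogeneous halfspaces on $\R^{n+1}$ is $n+1$, and restricting the weights to have last coordinate $-1$ and the domain to $\{p_S : S \subseteq N\}$ can only decrease it, I get $\VC(\Phi_v) \le n+1$ for every $v$, hence $\Sd(\probleminstance) \le n+1 = O(n)$. Feeding this into Theorem~\ref{thm:upperbound} then yields the $O(n)$ sample-complexity claims obtained by direct arguments in \cite{balcan2015learning-arxiv, balkanski2017cost}.

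The step I expect to need the most care is the lift: because the threshold $v(S)$ depends on the sample point $S$, the functions $\phi_{v,\vec x}$ are not halfspaces on the raw characteristic vectors $\chi_S \in \R^n$, and one has to check that appending $v(S)$ to the point and $-1$ to the weight genuinely turns every $\phi_{v,\vec x}$ into a homogeneous halfspace, and that no constraint on $\Sol$ (efficiency, individual rationality) obstructs this reformulation --- it does not, since those constraints only shrink $\Sol$ and hence $\Phi_v$. Everything else is a direct invocation of the classical VC bound for halfspaces together with the identity $\Sd(\probleminstance) = \max_v \VC(\Phi_v)$. If a matching $\Omega(n)$ lower bound were wanted, the natural route would be to exhibit a single game and $\Theta(n)$ coalitions whose lifted points $p_S$ are shattered, which I would not pursue here.
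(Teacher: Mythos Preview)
Your proposal is correct and follows essentially the same route as the paper: both reduce S-shattering for the PAC core to VC-shattering by linear threshold functions over the characteristic vectors of coalitions, and then invoke the classical $O(n)$ bound on the VC dimension of halfspaces. The paper's proof simply asserts the equivalence to ``shattering sets of vectors in the hypercube $\{0,1\}^n$ with linear classifiers'' and quotes the $\le n$ bound, whereas you make the reduction explicit by lifting $S \mapsto (\chi_S, v(S)) \in \R^{n+1}$ and $\vec x \mapsto (\vec x,-1)$ to handle the data-dependent threshold $v(S)$, arriving at $\le n+1$; this extra care is the only real difference, and both yield the stated $O(n)$ bound.
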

\begin{proof}
We show that any set of $>n$ coalitions cannot be S-shattered as per Definition~\ref{def:S-shattering}. Taking a set of coalitions $\cal S = \{S_1,\dots,S_m\}$, it is S-shattered if there is some TU cooperative game $v:\cal S \to \R_+$ such that for all $\cal T \subseteq \cal S$, there exists some vector $\vec x^*$ in $\R^n$ such that for all $T \in \cal T$, $v(T) \ge x^(T)$, and for all $S \in \cal S\setminus \cal T$, $v(S) < x^*(S)$. Let us bound the dimension $m$ of $\cal S$. The problem is equivalent to shattering sets of vectors in the hypercube $\{0,1\}^n$ with linear classifiers, which is well-known to be impossible for sets of size $> n$ \cite{anthony1999learning}. We conclude that $\Sd$ for the PAC core of TU cooperative games is $\leq n$.  
\end{proof}
We note that the solution computed in \citet{balcan2015learning-arxiv} is only efficient (i.e. with $\sum_{i = 1}^n x_i = v(N)$) if the core of the cooperative game $v$ is not empty. In the case where the game $v$ has an empty core, the solution computed still satisfies the core constraints with high probability with respect to $\cal D$, but may not be efficient. However, the payoff outputted is using the {\em minimal subsidy required} in order to stabilize the game. In other words, the total payoff is no more than the {\em cost of stability} of the underlying game $v$ \cite{bachrach2018costab}. Efficiency is an important requirement: without it, one can ``cheat'' and pay each player some arbitrarily high amount, guaranteeing that the underlying game is stable. 
\section{PAC Competitive Equilibria in Exchange Economies}\label{apdx:markets}
In Section~\ref{sec:markets-primer} we define Fisher markets; these are markets where goods are indivisible, and each player $i \in N$ has a budget $\beta_i$. In what follows, we consider {\em exchange economies}  \cite[Chapters 6 and 9]{agtbook}, which follow a somewhat different structure.
\subsection{Exchange Economies}
In exchange economies we have a set $G = \{g_1,\dots,g_k\}$ of $k$ {\em divisible} goods, and player valuations are of the form $v_i:[0,1]^k \to \R_+$ for every $i \in N$; bundle assignments are $\pi : N \to [0,1]^k$ (assigning a quantity $q_j\le 1$ of good $g_j$ to player $i$ can be thought of as player $i$ receiving $q_j$ percent of good $g_j$). 

In exchange economies with divisible goods, we assume that each player has an initial {\em endowment} of goods $\vec e_i\in [0,1]^k$, denoting the (divisible) amount of each good that she possesses. It is no loss of generality to assume that $\sum_{i = 1}^n \vec e_{i,j} = 1$ for every good $g_j$; in other words, the quantity $e_{i,j}$ is the relative amount of good $g_j$ that player $i$ possesses.
Given item prices, players {\em demand} certain item bundles.
The {\em affordable set} is the set of all divisible goods whose total price is less than the worth of player $i$'s endowment under $\vec p$.
$$\cal A_i(\vec p) = \left\{\vec g\in [0,1]^k: \sum_{j =1}^k p_jg_j \le \sum_{j = 1}^k p_je_{i,j} \right\}.$$
An outcome $\tup{\pi,\vec p}$ is a competitive equilibrium if $\pi(i) \in \cal A_i(\vec p)$, and $\forall \vec g \in \cal A(\vec p)$, $v_i(\pi(i)) \ge v_i(\vec g)$.
\subsection{PAC Market Equilibria in Exchange Economies}
We assume that player preferences are convex. 
We show that for any sample of fractional bundles $T = \{\vec b_1,\dots,\vec b_m \}$,
there exists a solution $\tup{\pi^*,\vec p^*}$ consistent with $T$ with non-positive excess assignment (but potentially leaving some goods unassigned). We assume that none of the goods are undesirable, i.e. for every good there exists at least one player that assigns a positive value to some quantity of that good.

\begin{theorem}\label{thm:exchange-divisible}
Suppose we are given an exchange economy for divisible goods with convex preferences and without undesirable goods. We observe $m$ sampled bundles $T = \{\vec b_1, \dots,\vec b_m\}$ and player valuations over the bundles, along with player endowments $\vec e_1,\dots,\vec e_n$. There exists a solution $\tup{\pi^*,\vec p^*}$ such that every player $i$ is assigned a bundle they can afford given their endowment, which is consistent (against any possible valuation functions that could have generated the observed values). 
\end{theorem}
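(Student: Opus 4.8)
The plan is to mirror the proof of Theorem~\ref{thm:fisher-indivisible}: rather than reasoning about the true (unknown) valuations directly, I would replace them by a well-behaved \emph{surrogate} economy, invoke a classical competitive-equilibrium existence theorem for exchange economies, and then transfer the equilibrium guarantee back to consistency. The crucial difference with the Fisher case is that there Budish's discontinuous fixed-point machinery was needed precisely because no convexity was assumed; here convexity of preferences is exactly the hypothesis that lets a standard Arrow--Debreu-style argument (see \cite[Chapters 6 and 9]{agtbook}) apply.

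Concretely, for each player $i$ I would define a surrogate valuation $\tilde v_i:[0,1]^k\to\R_+$ as the pointwise infimum, over all valuations consistent with the observed data on $T=\{\vec b_1,\dots,\vec b_m\}$, of their values; i.e. the smallest convex, continuous $v':[0,1]^k\to\R_+$ with $v'(\vec 0)=0$ and $v'(\vec b_j)=v_i(\vec b_j)$ for all $j$. By construction $\tilde v_i\le v_i'$ pointwise for every consistent $v_i'$, and $\tilde v_i(\vec b_j)=v_i(\vec b_j)$ since all consistent valuations agree on the sample. The key structural step is to check that $\tilde v_i$ is itself a convex preference: a quasi-concave valuation is forced by its values on $T$ to be at least $\min_{j\in J}v_i(\vec b_j)$ on $\mathrm{conv}(\{\vec b_j:j\in J\})$, and this envelope is attainable, so the upper contour set $\{\vec g:\tilde v_i(\vec g)\ge t\}$ is exactly $\mathrm{conv}(\{\vec b_j:v_i(\vec b_j)\ge t\}\cup\{\vec 0\})$ (taking the monotone closure to respect free disposal of divisible goods), which is convex. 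Thus $\tilde v_i$ has finitely many ``levels'' and is a legitimate input (continuous up to a harmless mollification on the boundary) for a CE existence theorem. Applying such a theorem to the surrogate economy $\langle(\tilde v_i)_{i\in N},(\vec e_i)_{i\in N}\rangle$, whose aggregate endowment $\sum_i\vec e_i=\vec 1$ is strictly positive, yields a free-disposal equilibrium $\tup{\pi^*,\vec p^*}$: each $\pi^*(i)\in\cal A_i(\vec p^*)$ maximizes $\tilde v_i$ over player $i$'s budget set, and $\sum_{i\in N}\pi^*(i)\le\vec 1$, which gives the ``non-positive excess, goods possibly left unassigned'' part of the statement.

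Consistency then follows from a short inequality chain. Fix a player $i$, a sampled bundle $\vec b_j$, and any valuation $v_i'$ consistent with the observed values; if $\vec b_j\notin\cal A_i(\vec p^*)$ the local loss $\lambda_i(\vec b_j,\vec v',\tup{\pi^*,\vec p^*})$ is $0$, and otherwise
$$v_i'(\pi^*(i))\ \ge\ \tilde v_i(\pi^*(i))\ \ge\ \tilde v_i(\vec b_j)\ =\ v_i(\vec b_j)\ =\ v_i'(\vec b_j),$$
using in order that $\tilde v_i$ is a pointwise lower bound, that $\pi^*(i)$ is $\tilde v_i$-optimal among affordable bundles, equality on the sample, and consistency of $v_i'$. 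Hence $\vec b_j\notin\cal P_i(\pi^*)$ under $v_i'$, so the local loss vanishes for every sampled bundle, every player, and every consistent profile $\vec v'$, i.e. $\hat L_T=0$, while each $\pi^*(i)$ is affordable given its endowment.

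The main obstacle is massaging the surrogate economy into the exact hypotheses of an off-the-shelf existence theorem. Since $\tilde v_i$ is satiated (its maximum is $\max_j v_i(\vec b_j)$, attained on a region) and may fail strict monotonicity/desirability on goods underrepresented in the sample, one can in general only obtain a \emph{quasi}-equilibrium; upgrading it so that $\pi^*(i)$ genuinely maximizes $\tilde v_i$ over the budget set --- which is what the inequality chain needs --- requires a survival/cheapest-point argument. This is where the ``no undesirable goods'' hypothesis is used, together with the observation that any good that is undesirable in the surrogate can simply be priced at $0$ and left unallocated. As with Theorem~\ref{thm:fisher-indivisible}, the resulting argument is non-constructive: it rests on a fixed-point-based equilibrium existence theorem rather than an algorithm.
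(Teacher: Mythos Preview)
Your route is genuinely different from the paper's. The paper does \emph{not} build a surrogate utility and then invoke an off-the-shelf Arrow--Debreu theorem; it applies Kakutani's fixed-point theorem directly to a tailor-made demand correspondence. For each player $i$ and price $\vec p$ it defines
\[
D_i^T(\vec p)\;=\;\bigcap_{u\in\cal U|_{i,T}} D(u,\vec p,\vec e_i),
\]
the set of affordable bundles that are weakly preferred to every sampled bundle under \emph{every} utility consistent with the data. It argues this intersected correspondence is nonempty, convex, compact, and continuous in $\vec p$, forms the excess-demand set-valued map $\mathbf z$, and runs the standard Gale--Nikaid\^o price update $\vec p\mapsto\mathbf g(\vec p)$; Kakutani then yields $\vec p^*$ and an allocation $\pi^*(i)\in D_i^T(\vec p^*)$ with non-positive excess. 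Consistency against all admissible utilities is immediate from the definition of $D_i^T$, so no transfer inequality is needed. The ``no undesirable goods'' assumption is used only to ensure no observed good gets price $0$.

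Your surrogate approach is a natural alternative, and the transfer chain $v_i'(\pi^*(i))\ge\tilde v_i(\pi^*(i))\ge\tilde v_i(\vec b_j)=v_i'(\vec b_j)$ is valid once you actually have a full equilibrium for $\tilde v$. Two points, though. First, a terminological slip: ``convex preferences'' means \emph{quasi-concave} utilities, so $\tilde v_i$ should be the pointwise infimum of quasi-concave (not convex) functions matching the sample; this is still quasi-concave since upper contour sets intersect, so the fix is cosmetic. Second, and this is the real gap: as you yourself note, $\tilde v_i$ is globally satiated and can fail local non-satiation and monotonicity, so a black-box existence theorem gives only a \emph{quasi}-equilibrium, and your middle inequality $\tilde v_i(\pi^*(i))\ge\tilde v_i(\vec b_j)$ requires a genuine one. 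The proposed upgrade---price surrogate-undesirable goods at zero and invoke ``no undesirable goods''---is asserted rather than argued, and this is precisely where all the work lies. The paper's intersected-demand formulation sidesteps the issue entirely: membership in $D_i^T(\vec p^*)$ only asks that $\pi^*(i)$ beat the finitely many sampled bundles, not that it maximize any utility over the whole budget set, so no non-satiation condition is ever needed.
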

\begin{proof}
Without loss of generality, let us work with the reduced space of only observed goods. Let $\cal U$ denote the underlying space of convex preferences from which we draw player preferences over assignments. Let $\cal U_{|i,T}$ denote the space of all valuation functions $u$ that satisfy the observed values, ie. $u(\vec b_j) = v_i(\vec b_j)$ for every $j \in [m]$. 
Since prices only need to satisfy the affordability criterion for every player, i.e. $\vec p^* \cdot \pi(i) \leq \vec p^* \cdot \vec e_i$, we can normalize and assume that prices belong to the simplex $\Delta_{n-1}$. Also, observe that the absence of undesirable goods implies that in any consistent solution the price of any observed good cannot be $0$.

Now let us define the demand set function as $D: \cal U \times \Delta_{n-1} \times [0,1]^n \rightarrow 2^{[0,1]^n}$, such that 
\begin{align*}
D(u, \vec p, \vec e_i) = \left\{ \vec b \in [0,1]^n : u(\vec b) \geq u(\vec b_j) \forall b_j \in T; \text{ and } \vec p \cdot \vec b \le \vec p \cdot \vec e_i\right\}.
\end{align*} 
We observe that under convex preferences (i.e. quasi-concave utility functions), for every $u$, $\vec p$ and $\vec e$, $D(u,\vec p,\vec e)$ is a convex and compact body; in addition, $D$ is continuous in $\vec p$. 
Define, for every player $i\in N$, $D_i^T(\vec p) = \bigcap_{u \in \cal U|_{i,T}} D(u, \vec p, \vec e)$: $D_i^T(\vec p)$ is the set of all possible bundles that player $i$ might demand under the price vector $\vec p$, under all possible utility functions that agree with the sample $T$. The intersection $D_i^T$ is convex and compact, as well as continuous in $\vec p$. Also observe that $D_i^T(\vec p)$ is always non-empty, since there is at least one bundle among the observed samples and the empty bundle which belongs to each of the $D(u, \vec p, \vec e)$.

Let $f: [0,1]^{k \times n} \to [0,n]^k$ be the {\em excess demand function}: $f(\pi) = \sum_{i \in N} \pi(i) - \vec 1$ (where $\vec{1} = (1,1,\dots,1) \in [0,1]^n$). 
Let $\mathbf{z} : \Delta_{n-1} \rightarrow 2^{[0,k]^n}$, be the function $\mathbf{z}(\vec p) = \lbrace f(\pi) : \pi \in \prod_{i \in K} D_i^T (\vec p) \rbrace$. 
The function $f$ is linear, therefore $\mathbf{z}(\vec p)$ is convex and compact, and $\mathbf{z}$ is continuous. 
Using $\mathbf{z}$, we define a function $\mathbf{g} : \Delta_{n-1} \to 2^{\Delta_{n-1}}$ 
such that $r$-th component is given by 
\begin{align*}
\mathbf{g}(\vec p) = \left\{ {\vec g}: \mbox{ where } g_r(\vec p) = \frac{p_r + \max\{0, z_r\}}{1 + \sum_{s=1}^n \max\{0,z_s\}},\mbox{ for some } \vec z \in \mathbf{z}(\vec p) \right\}
\end{align*}


By applying Kakutani's fixed-point theorem over $\mathbf{g}$, we get the existence of some $\vec p^*$ such that $\vec p^* \in \mathbf{g}(\vec p^*)$. This implies the existence of some $\pi \in \prod_{i \in K} D_i^T (\vec p)$, such that $f(\pi) \in \mathbf{z}(\vec p^*)$ satisfies 
$$p^*_r = \frac{p^*_r + \max(0, f_r(\vec p^*))}{1 + \sum_{s=1}^n \max(0, f_s(\vec p^*))}$$
Let $r^*$ some non-positive component of $f(\pi)$, as argued above; then $p^*_r = p^*_r (1 + \sum_{s=1}^n \max(0, f_s(\vec p^*)))$; this implies that for all $r$: $\max(0,f_r(\vec p^*)) = 0$. Therefore, there exists some allocation $\pi^*$ with non-positive excess demand at $\vec p^*$, such that $\tup{\pi^*,\vec p^*}$ is consistent with the observed bundles against all possible $u \in \cal U|_{i,T}$ for every player $i$.
\end{proof}

\section{PAC Condorcet Winners}\label{apdx:condorcet}
Recall that a Condorcet winner is a candidate $c^*$ that beats every other candidate $c$ in a pairwise election, i.e. for every other candidate $c$, a majority of voters prefer $c^*$ to $c$. We note that if there are only two candidates, then a Condorcet winner trivially exists (barring the case when the votes are tied); however, when there are three or more candidates, a Condorcet winner is not guaranteed to exist. 
This case is known in the literature as Condorcet cycles (or Condorcet paradoxes). For the sake of completeness, we provide a simple example of a voting profile where no candidate is a Condorcet winner.
\begin{example}\label{ex:condorcet-cycle}
	Consider a setting with three candidates $a,b,c$ and three voters, $1,2$ and $3$ whose preferences over $a,b,c$ are as follows:
	\begin{align*}
	1:& a \succ_1 b \succ_1 c\\
	2:& b \succ_2 c \succ_2 a\\
	3:& c \succ_3 a \succ_3 b	
	\end{align*}
	1 and 3 prefer $a$ to $b$; 1 and 2 prefer $b$ to $c$; 2 and 3 prefer $c$ to $a$. Therefore, there are no Condorcet winners.
	\end{example}
Next, let us provide a complete proof of Theorem~\ref{thm:no-condorcet}. 
\thmnocondorcet*
\begin{proof}
Let us assume that there is a set of size $d$, $C_0 \subseteq C$, that is shattered. Then by definition of shattering, there exists $2^d$ different candidates corresponding to every subset of $C_0$, such that for $f \in 2^{C_0}$ there exists a candidate $c_f \in C$, such that $c_1 \in C_0$ beats $c_f$ in the tournament graph if and only if $f(c_1) = 1$.

Let us focus on $2^d - 2$ of these candidates, corresponding to all non-trivial functions $f \in 2^{C_0}$ (let us, for now, ignore the functions that assign a constant value (of $0$ or $1$) to all candidates in $C_0$). Then, for every pair of these functions $f_1$ and $f_2$, there exists a candidate $c_1 \in C_0$ such that $f_1(c_1) = 1$ and $f_2(c_1) = 0$, and a candidate $c_2$ such that $f_1(c_2) = 0$ and $f_2(c_2) = 1$. This implies the existence of a directed path of length at most $2$ from $c_{f_1}$ to $c_{f_2}$, and vice versa. Since, in a tournament graph, either the edge $c_{f_1}\to c_{f_2}$ or $c_{f_2}\to c_{f_1}$ exists, we know that $c_{f_1}$ and $c_{f_2}$ are members of some $3$-cycle. Since this is true for all such $c_f$'s, we know that $2^d - 2$ is less than or equal to largest number of candidates, such that for some tournament graph in $\HC$, every pair amongst them is part of some $3$-cycle. 
\end{proof}

\end{document}